\newenvironment{talign}
 {\align}
 {\endalign}
\newenvironment{talign*}
 {\csname align*\endcsname}
 {\endalign}
\newcommand{\method}{PromptEval}
\title{Efficient multi-prompt evaluation of LLMs}
\author{%
  \hspace{-0mm}Felipe Maia Polo\textsuperscript{1}\thanks{Corresponding author. E-mail: felipemaiapolo@gmail.com}, Ronald Xu\textsuperscript{2.6}, Lucas Weber\textsuperscript{3}, Mírian Silva\textsuperscript{4,5,6}, Onkar Bhardwaj\textsuperscript{5,6}\\ \textbf{Leshem Choshen\textsuperscript{2,5,6}}, \textbf{Allysson Flavio Melo de Oliveira\textsuperscript{5,6}}, \textbf{Yuekai Sun\textsuperscript{1}}, \textbf{Mikhail Yurochkin\textsuperscript{5,6}} \\
  \hspace{-0mm}\textsuperscript{1}University of Michigan, \textsuperscript{2}MIT, \textsuperscript{3}University Pompeu Fabra, \textsuperscript{4}Federal University of Minas Gerais\\ \hspace{-05mm}\textsuperscript{5}IBM Research,  \textsuperscript{6}MIT-IBM Watson AI Lab
}
\begin{document}

\maketitle

\begin{abstract}
Most popular benchmarks for comparing LLMs rely on a limited set of prompt templates, which may not fully capture the LLMs’ abilities and can affect the reproducibility of results on leaderboards. Many recent works empirically verify prompt sensitivity and advocate for changes in LLM evaluation. In this paper, we consider the problem of estimating the performance \emph{distribution} across many prompt variants instead of finding a single prompt to evaluate with. We introduce \method, a method for estimating performance across a large set of prompts borrowing strength across prompts and examples to produce accurate estimates under practical evaluation budgets. The resulting distribution can be used to obtain performance quantiles to construct various robust performance metrics (e.g., top 95\% quantile or median). We prove that \method\ consistently estimates the performance distribution and demonstrate its efficacy empirically on three prominent LLM benchmarks: MMLU, BIG-bench Hard, and LMentry; for example, \method\ can accurately estimate performance quantiles across 100 prompt templates on MMLU with a budget equivalent to two single-prompt evaluations. Moreover, we show how \method{} can be useful in LLM-as-a-judge and best prompt identification applications.\footnote{Our code can be found in \url{https://github.com/felipemaiapolo/prompteval} and the MMLU data can be found in \url{https://huggingface.co/PromptEval}. \method{} is also integrated into PromptBench \citep{zhu2024promptbench}; please check \url{https://github.com/microsoft/promptbench}.}

\end{abstract}

\section{Introduction}
\begin{wrapfigure}{r}{0.55\textwidth}
  \centering
  \vspace{-1.5\baselineskip} 
  \includegraphics[width=0.48\textwidth]{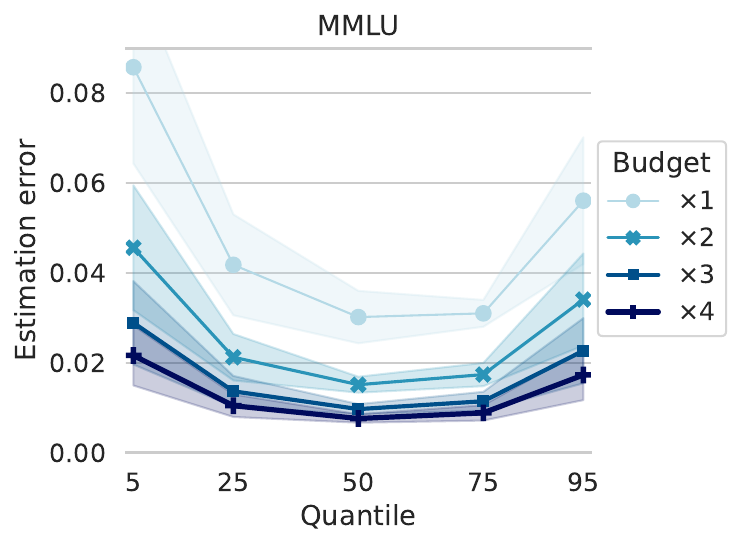}\vspace{-0.18cm}
  \caption{Average estimation error for performance quantiles across 100 templates given a limited budget (in multiples of one-template MMLU evaluations).} \vspace{-0.35cm}
  \label{fig:intro}
\end{wrapfigure}
In recent years, the rapid progress of large language models (LLMs) has significantly influenced various fields by enhancing automated text generation and comprehension. As these models advance in complexity and functionality, a key challenge that arises is their robust evaluation  \citep{perlitz2023efficient}. Common evaluation methods, which often rely on a single or limited number of prompt templates, may not adequately reflect the typical model's capabilities \citep{weber2023mind}. Furthermore, this approach can lead to unreliable and inconsistent rankings on LLM leaderboards, as different models may perform better or worse depending on the specific prompt template used. An ideal evaluation framework should minimize dependence on any single prompt template and instead provide a holistic summary of performance across a broad set of templates. \citet{mizrahi2023state}, for example, suggests using summary statistics, such as the average performance across many templates, as a way to compare the abilities of different LLMs. However, the main drawback of this method is the high computational cost when dealing with numerous templates and examples. 

We introduce \method{}, a method for efficient multi-prompt evaluation of LLMs. With a small number of evaluations, \method{} estimates performance across a large and \emph{given} pool of different prompt templates. Our approach is grounded in robust theoretical foundations and utilizes well-established models from the fields of educational assessment and psychometrics, such as Item Response Theory (IRT) \citep{cai2016item, van2018handbook, brzezinska2020item,lord1968statistical}. Our method is based on an IRT model that allows borrowing strength across examples and prompt templates to produce accurate estimates of all considered prompts with an evaluation budget comparable to evaluating a single prompt. In Figure \ref{fig:intro}, we demonstrate the ability of our method to jointly estimate various performance quantiles across 100 prompt templates with an evaluation budget ranging from one to four times of a conventional single-prompt evaluation on MMLU \citep{hendrycks2020measuring}.

Performance distribution across prompts can be used to accommodate various contexts when comparing LLMs \citep{Choshen2024NavigatingTM}. For example, it can be used to compute the mean performance as suggested by \citet{mizrahi2023state}. One can also use performance distributions directly to compare LLMs via various notions of stochastic dominance for risk-sensitive scenarios \citep{nitsure2023risk}. Here we primarily focus on the full distribution and its quantiles as they provide a flexible statistic that can inform decisions in varying contexts. For instance, a typical model performance corresponds to a median (50\% quantile), 95\% quantile can be interpreted as performance achievable by an expert prompt engineer, while 5\% quantile is of interest in consumer-facing applications to quantify low-end performance for a user not familiar with prompt engineering. We also demonstrate (\S\ref{sec:further}) how our method can be used to account for prompt sensitivity in the LLM-as-a-judge framework \citep{alpaca_eval} and to do best prompt identification \citep{shi2024best}. 

Our main contributions are:
\begin{itemize}[leftmargin=.4cm]
    \item We propose (\S\ref{sec:method}) a novel method called \method{} which permits efficient multi-prompt evaluation of LLMs for a \emph{given} pool of prompt templates with a limited number of evaluations. Moreover, we theoretically show (\S\ref{sec:theory}) that \method{} has desirable statistical properties such as consistency in estimating performance distribution and its quantiles. 
    \item We practically demonstrate (\S\ref{sec:experiments}) efficacy of \method{} in estimating performance across 100+ prompts and finding the best-performing prompt for various LLMs using data derived from three popular benchmarks: MMLU \citep{hendrycks2020measuring}, BIG-bench Hard (BBH) \citep{suzgun2022challenging}, and LMentry \citep{efrat2022lmentry}.
    \item We show (\S\ref{sec:further}) how \method{} can be applied to account for prompt sensitivity in the LLM-as-a-judge framework and to identify the best prompt in a large pool of options.
    \item We conduct the first large-scale study of prompt sensitivity of 15 popular open-source LLMs on MMLU. We present our findings based on evaluating 100 prompt templates in Section \ref{sec:mmlu} and release the evaluation data.
\end{itemize}

\subsection{Related work}
\label{sec:related-work}

\textbf{LLMs' sensitivity to prompt templates} The sensitivity of Large Language Models (LLMs) to the prompts is well-documented. For example, \citet{sclar2023quantifying} revealed that subtle variations in prompt templates in few-shot settings can lead to significant performance discrepancies among several open-source LLMs, with differences as large as 76 accuracy points in tasks from the SuperNaturalInstructions dataset \citep{wang2022super}. Additionally, they report that the performance of different prompt templates tends to correlate weakly between models. This finding challenges the reliability of evaluation methods that depend on a single prompt template. To measure LLMs sensitivity, the researchers suggested calculating a ``performance spread,'' which represents the difference between the best and worst performances observed. \citet{mizrahi2023state} conducted a complementary analysis using state-of-the-art models and subsets of BigBench and LMentry \citep{srivastava2022beyond, efrat2022lmentry}. The authors arrive at similar conclusions with respect to LLMs' sensitivity to the used prompt templates and empirically showed that the LLM ranking considering different formats are usually weakly or intermediately correlated with each other. As a solution to the lack of robustness in LLM evaluation, the authors propose the use of summary statistics, as the average performance, for LLM evaluation. Some other works, \eg, \citet{voronov2024mind, weber2023mind, weber2023icl}, show that even when in-context examples are given to the models, the prompt templates can have a big impact on the final numbers, sometimes reducing the performance of the strongest model
in their analyses to a random guess level \citep{voronov2024mind}. In a different direction, \citet{shi2024best} acknowledges that different prompt templates have different performances and proposes using best-arm-identification to efficiently select the best template for an application at hand. One major bottleneck is still on how to efficiently compute the performance distribution for LLMs over many prompt templates; we tackle this problem.

\textbf{Efficient evaluation of LLMs} The escalating size of models and datasets has led to increased evaluation costs. To streamline evaluations, \citet{ye2023predictable} considered minimizing the number of \emph{tasks} within Big-bench \citep{srivastava2022beyond}. Additionally, \citet{perlitz2023efficient} observed that evaluations on HELM \citep{liang2022holistic} rely on diversity across datasets, though the quantity of examples currently utilized is unnecessarily large. \citet{perlitz2023efficient} also highlighted the problems in evaluating with insufficient prompts and called to evaluate on more, suggesting evaluating the typical behavior by sampling prompts and examples together by employing stratified sampling, where subscenarios give the strata; in our work, we also apply stratification but consider prompt templates and examples to give the strata. To accelerate evaluations for classification tasks, \citet{vivek2023anchor} suggested clustering evaluation examples based on model confidence in the correct class. More recently, \citet{polo2024tinybenchmarks} empirically showed that it is possible to shrink the size of modern LLM benchmarks and still retain good estimates for LLMs' performances. Similarly (and in parallel to this work) \citet{ashury2024label} recognized unlabeled examples that better distinguish between models or prompts, by analyzing model outputs on them, hence saving costly annotation for them. Despite these advancements in streamlining LLM evaluations, there are no other works that propose a general and efficient method to estimate the benchmark performance of LLMs across prompt templates to the best of our knowledge.

\textbf{Item response theory (IRT)} IRT \citep{cai2016item, van2018handbook, brzezinska2020item, lord1968statistical} is a collection of statistical models initially developed in psychometrics to assess individuals' latent abilities through standardized tests but with increasing importance in the fields of artificial intelligence and natural language processing (NLP). For example, \citet{lalor2016building} used IRT's latent variables to measure language model abilities, \citet{vania2021comparing} applied IRT to benchmark language models and examine the saturation of benchmarks, and \citet{rodriguez-etal-2021-evaluation} explored various uses of IRT with language models, including predicting responses to unseen items, categorizing items by difficulty, and ranking models. Recently, \citet{polo2024tinybenchmarks,shabtay2024livexiv} suggested using IRT for efficient LLM performance evaluation; both works used the Performance-IRT (pIRT) estimator to evaluate LLMs. \method{} is built upon pIRT.

\section{Problem statement}

In this section, we describe the setup we work on and what our objectives are. Consider that we want to evaluate a large language model (LLM) in a certain dataset composed of $J$ examples (also known as questions or items in the literature) and each one of the examples is responded to by the LLM through prompting; we assume that there exists $I$ different prompt templates that can be used to evaluate the LLM. After the prompt template $i\in \cI \triangleq [I]$ and example $j\in \cJ \triangleq [J]$ are channelled through the LLM, some grading system generates a correctness score $Y_{ij}\in \{0,1\}$, which assumes $1$ when the prompt template $i$ and example $j$ jointly yield a correct response and $0$ otherwise\footnote{In some cases, the correctness score may be a bounded number instead of binary -- see Appendix \ref{sec:adapt}.}.  For each one of the prompt templates $i\in \cI$, we can define its performance score as
\begin{talign*}
    S_i\triangleq  \frac{1}{J}\sum_{j \in \cJ} Y_{ij}.
\end{talign*}
The performance scores $S_i$'s can have a big variability, making the LLM evaluation reliant on the prompt choice. To have a comprehensive evaluation of the LLM, we propose computing the full \emph{distribution of performances}  and its corresponding quantile function, \ie, 
\begin{talign}\label{eq:dist_quant}
    F(x)\triangleq  \frac{1}{I}\sum_{i \in \cI} \ones_{[S_i,\infty)}(x)~~\text{ and }~~Q(p)\triangleq \inf\{x\in \reals: F(x)\geq p\}.
\end{talign}
The main challenge in obtaining this distribution is that it can be very expensive since the exact values for the performance scores $S_i$'s require $I\cdot J$ evaluations. In this paper, we assume that only a small fraction of evaluations is available, \eg, $<5\%$ of the total number of possible $I\cdot J$ evaluations, but we still aim to accurately estimate the performance distribution and its quantiles. More concretely, we assume the correctness scores $Y_{ij}$'s are only evaluated for a small set of indices $\cE \subseteq \cI\times \cJ$; in compact notation, we define $Y_\cE\triangleq \{Y_{ij}\}_{(i,j)\in\cE}$. Here, the letter $\cE$ stands for \emph{evaluations}. Using the observed data $Y_\cE$, our main objective is to estimate the performance scores distribution $F$ (resp. quantile function $Q$), \ie, computing a function $\hat{F}$ (resp. $\hat{Q}$) that is \emph{close} to $F$ (resp. $Q$).

\section{Performance distribution and quantiles estimation}\label{sec:method}
We propose borrowing strength across prompt templates and examples to produce accurate estimates for the performance distribution and its quantile function. To achieve that, we need a model for the correctness scores $Y_{ij}$'s that allows leveraging patterns in the observed data and estimators for individual $S_i$'s. We start this section by first introducing a general model for  $Y_{ij}$'s and then we introduce our estimators for the performance distribution and quantile functions. 

\subsection{The correctness model} We assume the observations $Y_{ij}$'s are independently sampled from a Bernoulli model parameterized by prompt/example-specific parameters. That is, we assume 
\begin{align}\label{eq:bernoulli}
    Y_{ij}\sim \text{Bernoulli}(\mu_{ij}),
\end{align}
where $\mu_{ij}$ denotes the mean of the Bernoulli distribution specific to prompt format $i$ and example $j$. We can write $\mu_{ij}=\mu(\theta_i, \beta_j)$, where $\theta_i$'s are prompt-specific parameters, $\beta_j$'s are example-specific parameters and $\mu$ is a function that maps those parameters to the Bernoulli mean. This probabilistic model is very general and comprehends factor models such as the large class of Item Response Theory (IRT) models \citep{cai2016item, van2018handbook, brzezinska2020item,lord1968statistical}; as we will see, our model can be seen as a general version of an IRT model. For generality purposes, we assume that the parameters $\theta_i$'s and $\beta_j$'s can be written as functions of prompt-specific ($x_i$'s) and example-specific ($z_j$'s) vectors of covariates. That is, we assume $\theta_i = f_\psi (x_i)$ or $\beta_j = g_\gamma (z_j)$, where $\psi$ and $\gamma$ are global parameters that can be estimated. These covariates can be, for example, embeddings of prompt templates
in the case of $x_i$'s and some categorization or content of each of the examples in the case of $z_j$'s. In this work, we adopt $\mu(\theta_i, \beta_j) = \sigma(\theta_i-\beta_j) = \sigma(f_\psi (x_i)-g_\gamma (z_j))$, where $\sigma$ denotes the standard logistic function and the functions $f_\psi$ and $g_\gamma$ have their image in $\reals$. That is, our model assumes that
\begin{align}\label{eq:generalize_rasch}
    \Pr(Y_{ij}=1; \psi,\gamma) = \sigma\big(f_\psi (x_i)-g_\gamma (z_j)\big)\triangleq\frac{1}{1+\exp[-(f_\psi (x_i)-g_\gamma (z_j))]} .
\end{align}
The functions $f_\psi$ and $g_\gamma$ can be represented with neural networks. On the simpler side, one could just assume $f_\psi$ and $g_\gamma$ are linear, that is, $\theta_i = \psi^\top x_i$ or $\beta_j = \gamma^\top z_j$; this formulation is known as the linear logistic test model in psychometrics \citep{fischer1973linear,de2004explanatory}. We consider that, in some cases, a constant can be embedded in $x_i$ in order to include an intercept in the model. When $x_i$ and $z_j$ are one-hot encoded vectors, \ie, vector of zeros but with 1's on the entries $i$ and $j$, the model in \eqref{eq:generalize_rasch} reverts to a popular IRT model known as the  Rasch model \citep{georg1960probabilistic,chen2023statistical}, which is widely used in fields such as recommendation systems \citep{starke2017effective} and educational testing \citep{clements2008development}. One major limitation of the basic Rasch model is that the number of parameters is large, compromising the quality of the estimates for $\psi$ and $\gamma$ when either the number of prompt formats $I$ or the number of examples $J$ is large and $|\cE|$ is small, \ie, only a few evaluations are carried out. This degradation in the quality of the estimates can directly affect the quality of the performance distribution estimates. Finally, we fit the parameters $\psi$ and $\gamma$, obtaining the estimates $\hat{\psi}$ and $\hat{\gamma}$, by maximizing the log-likelihood  of the observed data (negative cross-entropy loss), \ie,
\begin{align}\label{eq:mle}
    (\hat{\psi}, \hat{\gamma}) \in \arg\underset{\psi,\gamma}{\max}\sum_{(i,j)\in\cE}Y_{ij}\log \Pr(Y_{ij}=1; \psi,\gamma)+(1-Y_{ij})\log\left(1- \Pr(Y_{ij}=1; \psi,\gamma)\right).
\end{align}
Realize that fitting the model with linear/affine $f_\psi$ and $g_\gamma$, including the Rasch model case\footnote{For a detailed fitting procedure in the Rasch model case, please check \citet{chen2023statistical}.}, reduces to fitting a logistic regression model with $x_i$ and $z_j$ as the covariates. This observation highlights that the fitting process is expected to be very cheap in practice. For example, in our experiments, we fit logistic regression models in datasets with less than 2k samples and a couple of hundred (or a few thousand) columns, which is performed in a few seconds by a modern laptop. We include some more comments on the computational complexity of our method in Appendix \ref{append:complexity}.

\subsection{Performance distribution and quantiles estimation using the correctness model} 

The model in \eqref{eq:bernoulli} can be naturally used for performance estimation. That is, after observing  $Y_\cE$, the best approximation (in the mean-squared-error sense) for the performance of prompt format $i\in\cI$, $S_i$, is given by the following conditional expectation
\begin{align*}
\Ex[S_i\mid Y_\cE] =\frac{\lambda_i}{|\cJ_i|} \sum_{j\in\cJ_i} Y_{ij} + \frac{1-\lambda_i}{ |\cJ\setminus\cJ_i|} \sum_{j\not\in \cJ_i} \mu_{ij}   
\end{align*}
where $\cJ_i\triangleq \{j \in \cJ: (i,j) \in \cE\}$ and $\lambda_i = |\cJ_i|/ J$. In practice, computing $\Ex[S_i\mid Y_{\cE}]$ is impossible because the parameters $\theta_i$'s and $\beta_j$'s are unknown. We can, however, use a plug-in estimator for the conditional expectation using their maximum likelihood estimators, changing $\mu_{ij}$ for $\sigma\big(f_{\hat{\psi}} (x_i)-g_{\hat{\gamma}} (z_j)\big)$:
\begin{align}\label{eq:pirt}
\hat{\Ex}[S_i\mid Y_\cE] =\frac{\lambda_i}{|\cJ_i|} \sum_{j\in\cJ_i} Y_{ij} + \frac{1-\lambda_i}{ |\cJ\setminus\cJ_i|} \sum_{j\not\in \cJ_i} \sigma\big(f_{\hat{\psi}} (x_i)-g_{\hat{\gamma}} (z_j)\big)   . 
\end{align}
The basic version of this estimator, when no elaborate covariates (\eg,  embeddings) are included, is known as the Performance-IRT (pIRT) estimator \citep{polo2024tinybenchmarks}. We can apply our extended version of pIRT, which we call X-pIRT, to estimate the performance distribution across prompt templates. After observing $Y_\cE$ and fitting $(\hat{\psi},\hat{\gamma})$, we can compute $\hat{S}_i\triangleq \hat{\Ex}[S_i\mid Y_\cE]$ for all $i\in\cI$. Then, we define our estimators for the distribution of performances and its corresponding quantile function \ref{eq:dist_quant} as
\begin{talign}\label{eq:dist_quant_est}
    \hat{F}(x)\triangleq  \frac{1}{I}\sum_{i \in \cI} \ones_{[\hat{S}_i,\infty)}(x)~~\text{ and }~~\hat{Q}(p)\triangleq \inf\{x\in \reals: \hat{F}(x)\geq p\}.
\end{talign}
We name the procedure of obtaining $\hat{F}$ and $\hat{Q}$ as PromptEval and summarize it in Algorithm \ref{alg:est}.

\begin{figure}[t]
  \centering
  \begin{minipage}[t]{0.485\textwidth}
    \begin{algorithm}[H]
    \small
    \textbf{Input:} (i) $Y_\cE$,  (ii) covariates $x_i$'s and $z_j$'s.
    \smallskip

    \textbf{Output:} Estimates for the performances distribution  and its quantile function  \eqref{eq:dist_quant}.

    \smallskip
    Fit $\psi$ and $\gamma$ using observed scores $Y_\cE$ and covariates $x_i$'s and $z_j$'s \eqref{eq:mle}.

    \smallskip
    For each $i\in\cI$, compute $\hat{S}_i = \hat{\Ex}[S_i\mid Y_\cE]$ \eqref{eq:pirt}.

    \smallskip
    Compute estimates
    \[\textstyle
    \hat{F}(\cdot)\triangleq  \frac{1}{I}\sum_{i \in \cI} \ones_{[\hat{S}_i,\infty)}(\cdot)
    \]
    \[\textstyle
    \hat{Q}(\cdot)\triangleq \inf\{x\in \reals: \hat{F}(x)\geq \cdot\}
    \]
    \medskip
    \vspace{.055cm}
    \textbf{return} $\hat{F}$ and $\hat{Q}$.
    \caption{PromptEval}
    \label{alg:est}
    \end{algorithm}
  \end{minipage}
  \hfill
  \begin{minipage}[t]{0.485\textwidth}
    \begin{algorithm}[H] 
    \small
    \textbf{Input:} (i) sets $\cI$ and $\cJ$,  (ii) budget $B$.
    \smallskip

    \textbf{Output:} Observed indices $\cE$.

    \smallskip
    Initialize $\cE=\{\}$.

    \smallskip
    \For{$b = 0$ to $B-1$}{
      Among $i\in\cI$ with the least number of evaluations, randomly pick one of them and call it $\hat{i}$.

      Among $j\in\cJ$ such that $(\hat{i},j)\not\in \cE$, randomly pick $\hat{j}$ from the ones with the least number of evaluations.

      Update $\cE\leftarrow \cE\cup \{(\hat{i},\hat{j})\}$
    } 
    \medskip
    \textbf{return} $\cE$.
    \caption{Two-way balanced sampling}
    \label{alg:strat_sampling}
    \end{algorithm}
  \end{minipage}
  \vspace{-0.15in}
\end{figure}

\paragraph{Sampling $Y_\cE$} We have assumed $Y_\cE$ is given so far. In practice, however, we need to choose $\cE$, with $|\cE|\leq B$ where $B\in \bbN$ is the budget, and then sample the entries $Y_{ij}$ for all $(i,j)\in\cE$. One possible option is sampling $(i,j)$ without replacement from $\cI\times\cJ$ giving the same sampling probability to all entries. This option is, however, suboptimal because of its high instability: with a high chance, there will be some prompt formats (or examples) with a very low number of evaluations while others will have many. A more stable solution is given by Algorithm \ref{alg:strat_sampling}, which balances the number of times each prompt format and examples are evaluated. Algorithm \ref{alg:strat_sampling} can be seen as two-way stratified random sampling in which the number of examples observed for each prompt format is (roughly) the same and the number of prompt formats that observe each one of the examples is (roughly) the same.

\section{Theoretical guarantees}\label{sec:theory}

In this section, we claim the consistency of the distribution and quantile estimators detailed in Algorithm \ref{alg:est} as $I,J\to \infty$. We prove a result for the case in which $f_\psi$ and $g_\gamma$ are linear/affine functions. Before we introduce our results we need to introduce some basic conditions. As an extra result, in Appendix \ref{sec:xpirt} we also show that our extended version of pIRT \eqref{eq:pirt}, X-pIRT, is uniformly consistent over all $i \in \cI$, which can be useful beyond this work. We start by assuming that the covariates are uniformly bounded.
\begin{condition}\label{condit1}
    There is a universal constant $c>0$ such that $\sup_{i\in \cI}\norm{x_i}_2,\sup_{j\in \cJ}\norm{z_j}_2<c$.
\end{condition}

The next condition requires the number of unseen examples to increase sufficiently fast as $I,J\to \infty$, which is a realistic condition under the low-budget setup. Weaker versions of this condition are possible; we adopt this one because it makes our proof simpler.
\begin{condition}\label{condit2}
Assume (i) $m=|\cJ\setminus\cJ_i|$ is the same for all $i$'s and grows to infinity  and (ii) $\exp(\delta m)/I\to \infty$ as $I,J\to \infty$ for any $\delta>0$.
\end{condition}

The third condition requires the model we work with to be correctly specified and the maximum likelihood estimator defined in \eqref{eq:mle} to be consistent as $I,J\to \infty$, \ie, approach the true value. Evidently, $|\cE|$ needs to grow to infinity as $I,J\to \infty$; nevertheless, it could be the case that $|\cE|/(I\cdot J)\to 0$. When $f_\psi$ and $g_\gamma$ are linear/affine, the maximum likelihood procedure \eqref{eq:mle} is equivalent to fitting a logistic regression model and, in that case, the convergence of $(\hat{\psi},\hat{\gamma})$ is well-studied and holds under mild conditions when the dimensions of the covariates are fixed; see, for example, \citet{fahrmeir1985consistency}.
\begin{condition}\label{condit3}
    The data point $Y_{ij}$ is sampled from a Bernoulli distribution with mean $\sigma(\psi_0^\top x_i-\gamma_0^\top z_j)$ for some true global parameter values $\psi_0$ and $\gamma_0$. Moreover, we assume that $(\hat{\psi},\hat{\gamma})\to (\psi_0, \gamma_0)$ in probability as $I,J\to \infty$.
\end{condition}

We now introduce the main result in Theorem \ref{thm:main_consistency}, which shows the consistency of the distribution and quantile functions estimators introduced in Algorithm \ref{alg:est}. See Appendix \ref{append:proofs} for the proof.
\begin{theorem}\label{thm:main_consistency}
    Under conditions \ref{condit1}, \ref{condit2}, and \ref{condit3}, it is true that
    \[\textstyle
    \left|\hat{Q}_\cI(p)-Q_\cI(p)\right|  \to 0\text{ in probability as }I,J\to\infty\text{ for any $p\in[0,1]$},
    \]
    and that 
    \[
    W_1 (F, \hat{F})\to 0\text{ in probability as }I,J\to\infty,
    \]
    where $W_1(F, \hat{F})$ is the Wasserstein 1-distance between the distributions $F$ and $\hat{F}$.
\end{theorem}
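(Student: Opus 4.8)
The plan is to control the individual estimation errors $\hat S_i - S_i$ and to observe that an \emph{average} bound suffices for the Wasserstein claim, whereas a \emph{uniform-in-$i$} bound is what is needed for the pointwise quantile claim. For the first claim, note that $F$ and $\hat F$ are empirical distributions placing mass $1/I$ on $\{S_i\}_{i\in\cI}$ and $\{\hat S_i\}_{i\in\cI}$ respectively; the coupling that pairs the atom $S_i$ with the atom $\hat S_i$ is a valid transport plan, so
\[
W_1(F,\hat F)\;\le\;\frac{1}{I}\sum_{i\in\cI}\big|\hat S_i - S_i\big| ,
\]
and it remains to show the right-hand side tends to $0$ in probability.

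Because $\lambda_i=|\cJ_i|/J$ one checks $S_i=\frac1J\sum_{j\in\cJ}Y_{ij}$, so by~\eqref{eq:pirt}, and writing $\mu_{ij}=\sigma(\psi_0^\top x_i-\gamma_0^\top z_j)$ as in Condition~\ref{condit3} and $m\triangleq|\cJ\setminus\cJ_i|$ (constant in $i$ by Condition~\ref{condit2}),
\[
\hat S_i - S_i \;=\; \frac{1}{J}\sum_{j\notin\cJ_i}\big[\sigma(\hat\psi^\top x_i-\hat\gamma^\top z_j)-\mu_{ij}\big]\;+\;\frac{1}{J}\sum_{j\notin\cJ_i}\big[\mu_{ij}-Y_{ij}\big]\;=:\;A_i+B_i .
\]
For $A_i$: since $\sigma$ is $\tfrac14$-Lipschitz, Cauchy--Schwarz together with Condition~\ref{condit1} and $m\le J$ gives $|A_i|\le \frac{c}{4}\big(\norm{\hat\psi-\psi_0}_2+\norm{\hat\gamma-\gamma_0}_2\big)$, a bound free of $i$ that vanishes in probability by Condition~\ref{condit3}. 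For $B_i$: since Algorithm~\ref{alg:strat_sampling} selects $\cE$ without inspecting scores, conditionally on $\cE$ the unseen scores $\{Y_{ij}:(i,j)\notin\cE\}$ are independent $\mathrm{Bernoulli}(\mu_{ij})$ and independent of $(\hat\psi,\hat\gamma)$; hence $B_i$ is a centered average of $m$ independent bounded terms, $\Ex[|B_i|\mid\cE]\le\sqrt{\Ex[B_i^2\mid\cE]}\le\frac{1}{2\sqrt m}$, so $\Ex\big[\frac1I\sum_i|B_i|\big]\le\frac{1}{2\sqrt m}\to0$ because $m\to\infty$, and Markov's inequality yields $\frac1I\sum_i|B_i|\to0$ in probability. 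Combining the two bounds gives $W_1(F,\hat F)\to0$ in probability.

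For the quantile statement, $W_1$-convergence alone does not force pointwise quantile convergence (the quantile function may jump), so I would instead invoke the uniform consistency of X-pIRT proved in Appendix~\ref{sec:xpirt}, namely $\max_{i\in\cI}|\hat S_i-S_i|\to0$ in probability; this is obtained by the same decomposition, with $\max_i|A_i|\to0$ as above and $\max_i|B_i|\to0$ from a Hoeffding tail bound for each $B_i$ followed by a union bound over the $I$ prompts --- this is exactly where the growth condition $\exp(\delta m)/I\to\infty$ of Condition~\ref{condit2}(ii) is used. Given this, since sorting is $1$-Lipschitz in the $\ell_\infty$ norm, the order statistics obey $\max_i|\hat S_{(i)}-S_{(i)}|\le\max_i|\hat S_i-S_i|\to0$; and since $F,\hat F$ are supported on $I$ atoms one has $Q(p)=S_{(\lceil pI\rceil)}$ and $\hat Q(p)=\hat S_{(\lceil pI\rceil)}$ for $p\in(0,1]$, whence $|\hat Q(p)-Q(p)|\le\max_i|\hat S_{(i)}-S_{(i)}|\to0$ in probability (indeed uniformly in $p$).

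The main obstacle is the uniform-in-$i$ control of the noise terms $B_i$ required for the quantile half: a single $B_i$ concentrates only at rate $1/\sqrt m$, so pushing the maximum over $I$ prompts through a union bound forces the Hoeffding tail $2\exp(-m\varepsilon^2/2)$ to dominate the factor $I$, which is precisely the content of Condition~\ref{condit2}(ii); the remaining ingredients (the conditioning on $\cE$ that decouples the unseen scores from the fitted $(\hat\psi,\hat\gamma)$, and the Lipschitz estimates bounding $A_i$) are routine. A secondary point worth flagging is the gap between the two halves of the statement: the Wasserstein bound needs only the easy averaged error, but pointwise quantile consistency genuinely requires the stronger uniform bound, so the two claims are proved by separate arguments rather than one deduced from the other.
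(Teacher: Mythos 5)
Your proposal is correct, and both halves are established by a genuinely cleaner route than the paper's, while relying on the same underlying workhorse (uniform consistency of X-pIRT via a Lipschitz bound on the fitted-model error $A_i$ and a Hoeffding tail plus union bound on the sampling noise $B_i$, which is precisely where Condition~\ref{condit2}(ii) enters). For the quantile half, the paper proceeds through two combinatorial lemmas about how the $p$-quantile's index can shift under a uniform $\epsilon$-perturbation, concluding $|\hat Q(p)-Q(p)|\le 2\epsilon$; your route via the $\ell_\infty$-Lipschitz property of sorting together with the explicit atom formula $Q(p)=S_{(\lceil pI\rceil)}$ is both shorter and tighter, yielding $|\hat Q(p)-Q(p)|\le\epsilon$ uniformly in $p$. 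For the Wasserstein half, the paper first proves pointwise quantile convergence and then integrates it via Fubini, dominated convergence, and Markov's inequality; your direct transport-coupling bound $W_1(F,\hat F)\le\frac1I\sum_i|\hat S_i-S_i|$ avoids that detour entirely, and your observation that this half needs only the \emph{averaged} error (hence only $m\to\infty$, not the stronger $\exp(\delta m)/I\to\infty$) is a correct and useful refinement that the paper's proof obscures. Two small remarks: your variance bound $\Ex[B_i^2\mid\cE]\le\tfrac{1}{4m}$ holds but is loose --- the direct computation gives $\tfrac{m}{4J^2}\le\tfrac{1}{4m}$ --- and since the quantile half already forces the uniform bound $\max_i|\hat S_i-S_i|\to0$, the Wasserstein claim also follows trivially from it, so the separate averaged argument, while illuminating, is optional under the paper's stated conditions.
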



\section{Assessing multi-prompt evaluation strategies}
\label{sec:experiments}

\paragraph{General assessment} We assess the performance distribution and quantile function estimation methodology introduced in \S\ref{sec:method} in estimating the performance of LLMs and different prompt formats on data from three popular benchmarks. For a given LLM and a dataset, we consider two evaluation steps. First, we compare the full performance distribution with the estimated distribution, \ie, in this case, all quantiles are considered. To compare the full performance distribution $F$ and its estimate $\hat{F}$, both defined in \S\ref{sec:method}, we use the Wasserstein 1-distance which is equivalent to the average quantile estimation error in this case, \ie,
\begin{talign*}
    W_1 (F, \hat{F}) &= \int_0^1 |Q(t)-\hat{Q}(t)|\mathrm{d}t= \frac{1}{I}\sum_{i=1}^I|S_{(i)}-\hat{S}_{(i)}|,
\end{talign*}
where $S_{(i)}$ (resp. $\hat{S}_{(i)}$) is the $i$-th smallest value in $\{S_i\}_{i\in\cI}$ (resp. $\{\hat{\Ex}[S_i\mid Y_{\cE}]\}_{i\in\cI}$). Second,  we estimate some quantiles of interest (\eg, $5/25/50/75/95$-th) for the performance distribution across prompt formats and compare them with the true quantiles, that is, for some $p\in[0,1]$, we use $|Q(p)-\hat{Q}(p)|$ to measure the quality of our estimations.

\paragraph{Data} We use data derived from three popular benchmarks: MMLU \citep{hendrycks2020measuring}, BIG-bench Hard (BBH) \citep{suzgun2022challenging}, and LMentry \citep{efrat2022lmentry}. In the following, we give more details about each one of the used datasets. 
\begin{itemize}[leftmargin=.4cm]
    \item MMLU is a multiple choice QA benchmark consisting of 57 subjects (tasks) comprising approximately 14k examples. We ran 15 different open-source LLMs (including different versions of Llama-3 \citep{llama3}, Mistral \citep{jiang2023mistral}, and Gemma \citep{team2024gemma}) combined with 100 different prompt variations for each one of the MMLU tasks. We found that, within each one of the MMLU tasks, prompt templates can have great variability in their performances, making within-task analysis most suitable for assessing our method. 
    More details and analysis of the collected data can be found in \S\ref{sec:mmlu} and Appendix \ref{append:data}. 
    

    \item BIG-bench Hard (BBH) is a curated subset of BIG-bench \citep{srivastava2022beyond}, containing challenging tasks on which LLMs underperform the average human score. For BBH, we use the evaluation scores released by \cite{mizrahi2023state}. The evaluation data includes 11 open-source LLMs combined with a different number of prompt variations, ranging from 136 to 188 formats, for 15 tasks containing 100 examples each.
    
    \item LMentry consists of simple linguistic tasks designed to capture explainable and controllable linguistic phenomena. Like BBH, we use data generated by \cite{mizrahi2023state}. The authors made available the full evaluation data from 16 open-source LLMs combined with a different number of prompt variations, ranging from 226 to 259 formats, for 10 tasks containing from 26 to 100 examples each. 
\end{itemize}

\paragraph{Methods and baselines}  We consider different variations of the model presented in \eqref{eq:generalize_rasch} coupled with Algorithm \ref{alg:est}; for all variations, we use linear $f_\psi$ and $g_\gamma$. The most basic version of the model in \eqref{eq:generalize_rasch} assumes $x_i$ and $z_j$ are one-hot encoded vectors, \ie, vector of zeros with 1's on the entries $i$ and $j$, reverting the model to a Rasch model \citep{georg1960probabilistic,chen2023statistical}. Despite its simplicity, we show that it can perform well in some cases. A more advanced instance of \eqref{eq:generalize_rasch} assumes $x_i$ are either obtained using a sentence transformer \citep{reimers-2019-sentence-bert} to embed prompt templates or by extracting discrete covariates from the text, \eg, as the presence of line breaks, colons \etc (see Appendix Table \ref{features-table}). An example of a prompt template for LMentry used by \citet{mizrahi2023state} is ``\textit{Can \{category\} be used to classify all the \{words\} provided? Respond with either "yes" or "no".}'' Our method also allows using example covariates $z_j$, however, upon preliminary tests with sentence transformer we didn't observe improvements and chose to use one-hot-encoded vectors as in the basic Rasch model to represent examples. Next we detail the methods for obtaining the prompt covariates:


\begin{itemize}[leftmargin=.4cm]
    \item \textit{Prompt embeddings}. We embed prompt templates using a pre-trained sentence transformer variant \citep{karpukhin-etal-2020-dense} and reduce their dimensionality to $d=25$ using PCA. This is the most general solution that also works well in practice. We call it EmbPT.

    \item \textit{Fine-tuned prompt embeddings}. Sentence transformers in general might not be most suitable for embedding prompt templates, thus we also consider fine-tuning BERT \citep{devlin2019bert} as an embedder. To do so, we use evaluation data for all examples and prompt formats from a subset of LLMs (these LLMs are excluded when assessing the quality of our estimators) and fine-tune \texttt{bert-base-uncased} to predict $Y_{ij}$ as in \eqref{eq:mle}. We call this variation EmbFT and provide additional details in Appendix~\ref{append:finetuning}. We acknowledge that obtaining such evaluation data for fine-tuning might be expensive, however, it might be justified in some applications if these embeddings provide sufficient savings for future LLM evaluations.
    
    \item \textit{Discrete prompt covariates}. For BBH and LMentry, we coded a heuristic function that captures frequently occurring differences in common prompting templates. Examples of such covariates are the number of line breaks or the count of certain special characters (\eg, dashes or colons). Each one of these covariates is encoded in $x_i$ for each one of the prompt templates $i\in\cI$. A full list of the used heuristics is detailed in Appendix~\ref{append:heuristics}. For MMLU, we adopted approach of \citep{sclar2023quantifying} to generate prompt variations via templates (see Algorithm \ref{alg:template_gen}), which also provides a natural way to construct the covariates, \eg, the presence of dashes or colons.
\end{itemize}

To the best of our knowledge, the methods introduced in \S\ref{sec:method} are the first ones handling the problem of efficient evaluation of performance \emph{distribution} of LLMs across multiple prompts. Thus, we compare different variations of our method with one natural baseline (``avg'') which estimates $S_i$ by simply averaging $Y_{ij}$, that is, using the estimates $\hat{S}^{\text{avg}}_i =\frac{1}{|\cJ_i|} \sum_{j\in\cJ_i} Y_{ij}$. The estimates for the distribution and quantile function are then obtained by computing the function in \eqref{eq:dist_quant_est} using $\hat{S}^{\text{avg}}_i$ instead of $\hat{S}_i$. To make comparisons fair, we sample the data using Algorithm \ref{alg:strat_sampling} for all methods and the baseline.


\paragraph{Key results} We investigate the effectiveness of the different variations of \method{} (PE) against the ``avg'' baseline strategy in quantile estimation and overall performance distribution estimation across prompt templates. In total, we consider five variations of \method{}: (i) PE-Rasch (model in \eqref{eq:generalize_rasch} is a Rach model), (ii) PE-discrete (discrete covariates are used for prompt templates), (iii) PE-EmbPT (pre-trained LLM embeddings are used for prompt templates), and (iv) PE-EmbFT (fine-tuned LLM embeddings are used for prompt templates). Within each one of the benchmarks, we conduct a different experiment for each one of the tasks, LLMs, and 5 random seeds used when sampling $Y_\cE$. We report the average estimation error across tasks, LLMs, and seeds, while the error bars are for the average estimation errors across LLMs. We collect results for four different numbers of total evaluations, where $|\cE|\in\{200,400,800,1600\}$. To make our results more tangible, 200 evaluations are equivalent, on average, to $1.15\%$ to the total number of evaluations on BBH, $0.88\%$ to the total number of evaluations on LMentry, and $0.81\%$ to the total number of evaluations on MMLU. 

\begin{figure}[t]
\centering
\includegraphics[width=.67\textwidth]{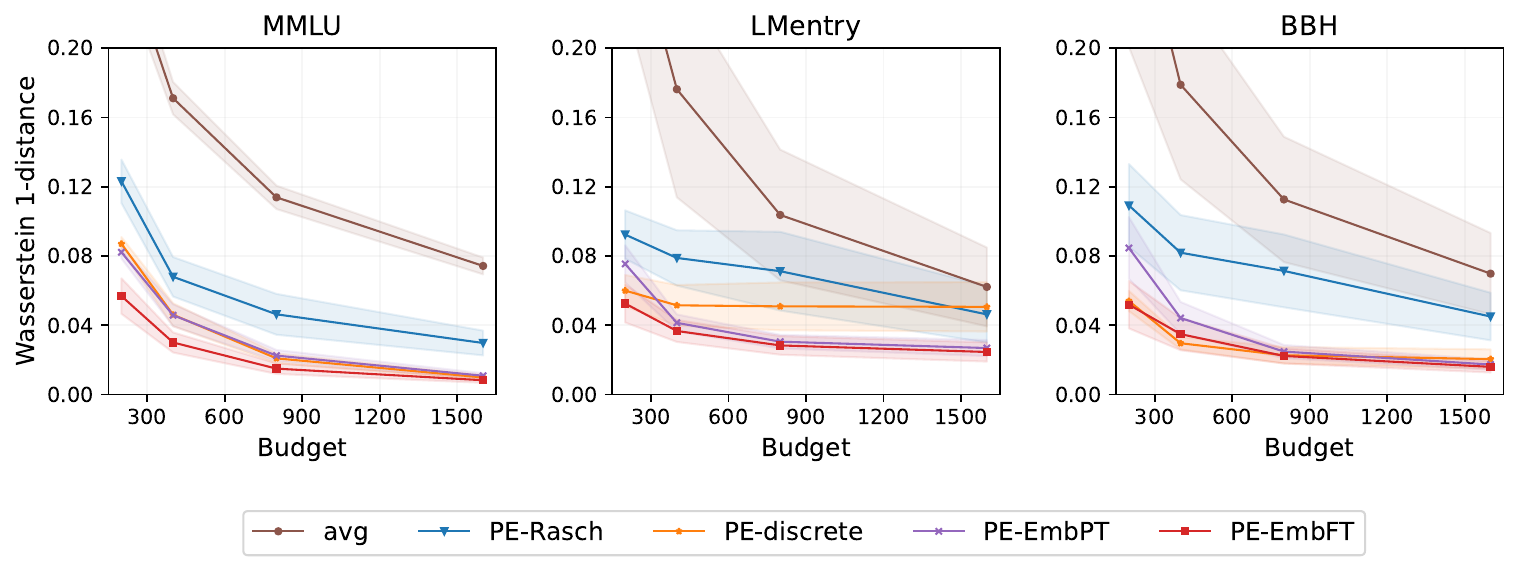}
\vspace{-0.07in}
\caption{\small  Performance distribution estimation errors measured with Wasserstein-1 distance on three benchmarks.}
\label{fig:main_res2}
\vspace{-0.05in}
\end{figure}

\begin{figure}[t]
    \centering
    \hspace{-0.15cm}
    \begin{tabular}{c}
        \includegraphics[width=.95\textwidth]{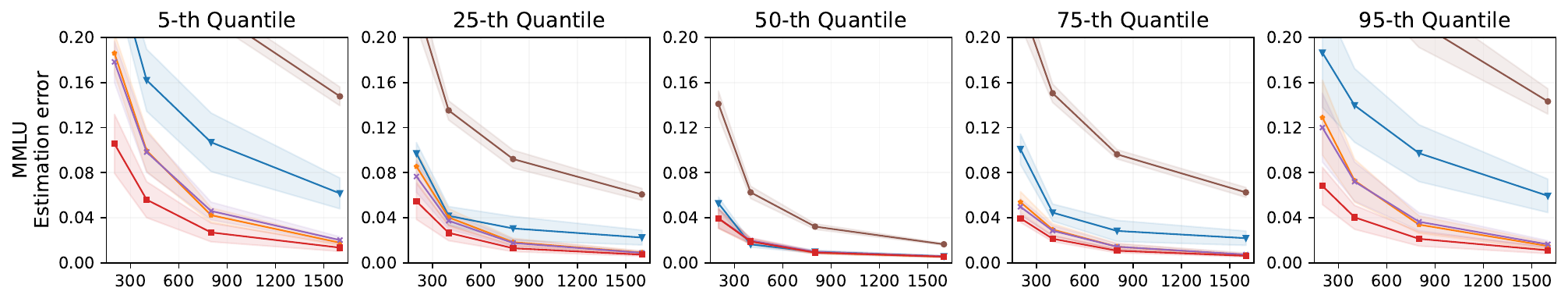}  \vspace{-0.2cm} \\
        \includegraphics[width=.95\textwidth]{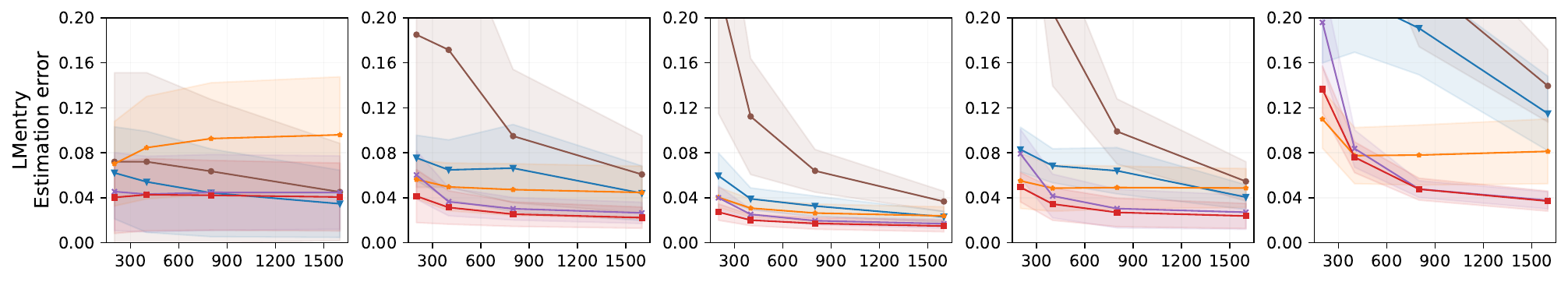} \vspace{-0.2cm}\\
        \includegraphics[width=.95\textwidth]{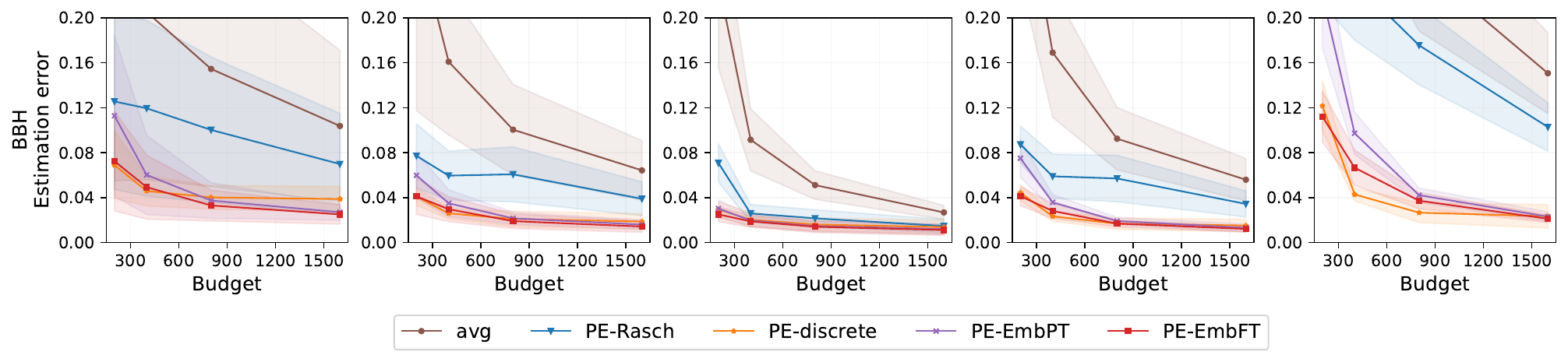}
    \end{tabular}
    \vspace{-0.13in}
    \caption{\small Performance quantile estimation errors for varying quantiles (columns) and benchmarks (rows).}
    \label{fig:main_res}
    \vspace{-0.15in}
\end{figure}

\begin{itemize}[leftmargin=.4cm]
\item \textit{Distribution estimation}. Our results for distribution estimation can be seen in Figure \ref{fig:main_res2}. We see that, in general, all variations of \method{}, including its simplest version  (\textit{PE-Rasch}), can do much better in distribution estimation when compared to the baseline. Among our methods, the ones that use covariates are the best ones.

\item \textit{Quantile estimation}. Our results for quantile estimation are presented in Figure \ref{fig:main_res}.
As before, even the simplest version of our method (\textit{PE-Rasch}) does much better than the considered baseline. For all the other variations of \method{}, estimating extreme quantiles is usually hard and needs more evaluations, while more central ones (\eg, median) can be accurately estimated with 200 evaluations, providing more than 100x compute saving in most cases. Regarding the different variations of \method{}, we found that the pre-trained embeddings are robust across benchmarks, while the discrete covariates could not do well on LMentry data. Using covariates obtained via fine-tuning the BERT model provides some further improvements, for example, for extreme quantiles and small evaluation budget settings on MMLU. However, fine-tuning requires collecting large amounts of evaluation data and in most cases, we anticipate that it would be more practical to use PromptEval with pre-trained embedder and moderate evaluation budget instead.


\end{itemize}


\section{Further applications of \method{}}\label{sec:further}
\subsection{Estimating the distribution of scores for the LLM-as-a-judge framework}

In this subsection, we explore the concept of LLM-as-a-judge using the AlpacaEval 2.0 \citep{alpaca_eval} benchmark. Specifically, we generate 100 prompt templates\footnote{We generate 10k variations using ChatGPT and undersample to 100 deleting prompts that are too similar to each other. You can see our code \href{https://github.com/felipemaiapolo/prompteval/blob/main/prompteval/generate_prompts.py}{here}.} to present to the judge, GPT-4o-mini \citep{4oMini}, allowing us to assess how sensitive model evaluation is to different evaluation prompts. We evaluated the performance of four LLMs with similar capabilities (Cohere Command\footnote{\url{https://docs.cohere.com/v2/docs/command-beta}}, Qwen1.5-7B-Chat \citep{team2024introducing}, Mistral-7B-Instruct-v0.2 \citep{jiang2023mistral}, LLaMa-2-70B-Chat \citep{touvron2023llama}) using only $\approx 2\%$ of the total evaluations (1.6k/80.5k). In contrast to the previous experiments, we do not make changes in the prompt templates given to the evaluated LLMs when giving an instruction. To fit \method{}, we binarize AlpacaEval 2.0 instance scores imposing a threshold at $1/2$, but we do not binarize the responses at test time. Figure \ref{fig:llm-judge} shows that the different variations of \method{} can obtain a much lower Wasserstein loss ($W_1$) when compared with the baseline ``avg''. In Appendix \ref{append:llm-judge}, we provide additional plots for this experiment. Specifically, Figure \ref{fig:llm-judge-hist} presents the score distribution histograms for the four models under consideration, while Figure \ref{fig:llm-judge-heat} illustrates how certain prompt templates consistently lead the judge to assign higher (or lower) scores across models. Despite this pattern, we observe that the ranking of the four LLMs changes in 36\% of the prompt templates.


\begin{figure}[t]
\centering
\includegraphics[width=.95\textwidth]{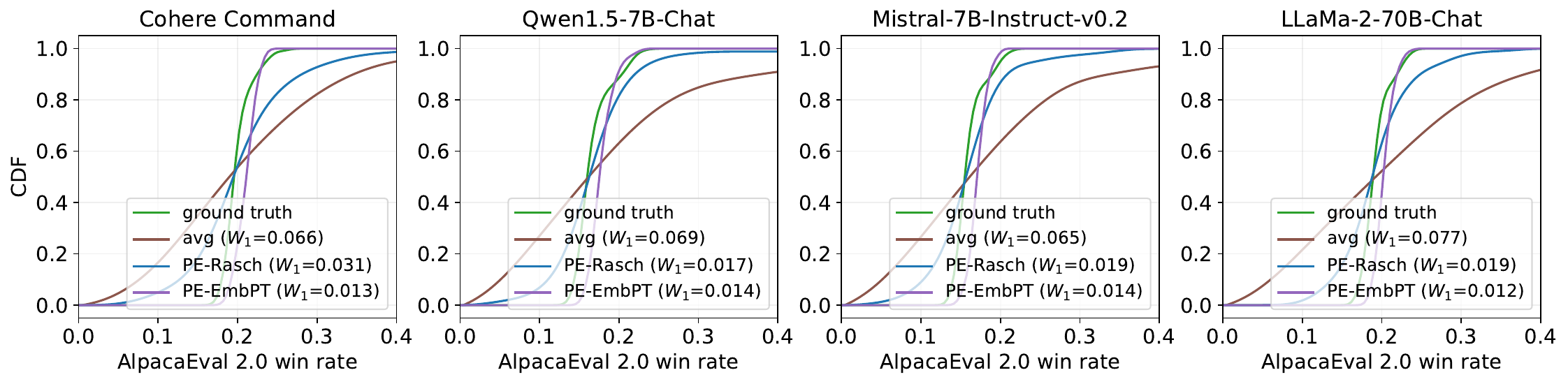}
\vspace{-0.07in}
\caption{\small  Estimating LLM-as-a-judge distribution of scores for 100 prompt variations given to the judge.}
\label{fig:llm-judge}
\vspace{-0.05in}
\end{figure}

\subsection{Best-prompt identification}
\begin{wrapfigure}[14]{r}{0.4\linewidth}
\vspace{-0.2in}
    \centering

  \includegraphics[width=.85\linewidth]{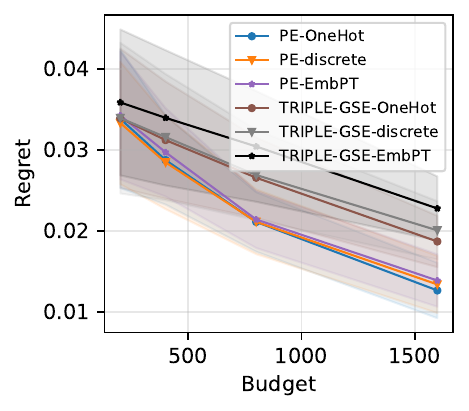}
    \caption{Best-prompt identification.}
    \label{fig:bai-mmlu}
    \vspace{-0.5in}
\end{wrapfigure}
\vspace{-0.1in}

 The best-prompt identification task \citep{shi2024best} is to find the best prompt from a set of fixed templates, \ie, the one that gives the best performance for a task at hand. \citet{shi2024best} propose framing this problem as a bandit problem and using a linear model or an MLP to predict the performance of each prompt template.
 To apply PromptEval in this setting we use our model \eqref{eq:generalize_rasch} and X-pIRT to estimate how good each template is coupled with sequential elimination algorithm \citep{azizi2021fixed} (as in \citet{shi2024best}) to select prompt-example pairs for evaluation in each round. 
 In Figure \ref{fig:bai-mmlu} we compare our PE to the baseline TRIPLE-GSE \citep{shi2024best} with a logistic regression performance predictor and the same three types of covariates (PE-OneHot corresponds to PE-Rasch in previous experiments).
 For all covariate choices, we show that using PromptEval for best-prompt identification
 results in lower regret, \ie, the performance of the best template minus the performance of the chosen template. We include the full results for other benchmarks and also apply TRIPLE-GSE with an MLP in Appendix \ref{append:best_prompt}. 




\section{Analysis of prompt sensitivity on MMLU} 
\label{sec:mmlu}

Prior work reports strong sensitivity of LLMs to spurious prompt template changes (see Section~\ref{sec:related-work}). For example, \citet{sclar2023quantifying} observe performance changes of up to 80\% for Natural Instructions tasks \citep{wang2022super} due to template changes. Despite its popularity, no such analysis exists for the MMLU dataset to date. We here provide an in-depth analysis of MMLU prompt sensitivity.


\begin{wrapfigure}[13]{r}{0.5\linewidth}
\vspace{-0.125in}
    \centering

  \includegraphics[width=\linewidth]{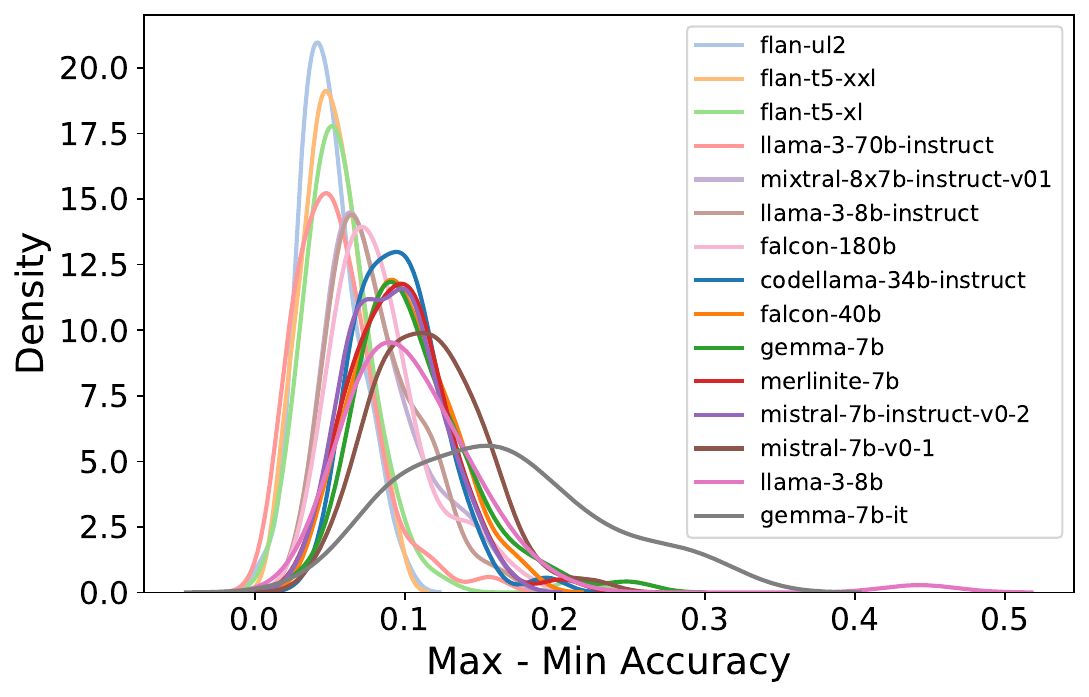}
  \vspace{-0.25in}
    \caption{Accuracy spread across 57 subjects.}\vspace{0.4cm}
    \label{fig:mmlu_all_histogram}

\end{wrapfigure}
\vspace{-0.15cm}
\paragraph{Performance spread}
When averaged across subjects, we observe relatively small performance spreads per LLM compared to other datasets in the literature (see Figure~\ref{fig:all_histogram_acc} in the Appendix~\ref{append:mmlu_details}).
For example, we can consistently identify \texttt{Llama-3-70B-Instruct} as the best performing model, independent of the prompt template.
On the other hand, scores within individual subjects are highly inconsistent.
Figure~\ref{fig:mmlu_all_histogram} shows the distribution of prompt spreads (max-min acc.) across subjects per LLM.
Most LLMs demonstrate a significant average spread of around 10\% at the subject level. 
\vspace{-0.2cm}


\paragraph{Template consistency}
In practice, having consistently performing templates is highly relevant \emph{within a single LLM} or \emph{across LLMs} for the same subject. To evaluate the template consistency, we rank template performances either across subjects or across LLMs to then calculate the agreement across those rankings using Kendall's $W$~\citep[][ inspired by \citealt{mizrahi2023state}]{kendall1939problem}.

Within LLMs, we observe that \texttt{Gemma-7B-it} has a notably higher Kendall's $W$ of 0.45 than any other model, meaning a fixed set of prompts performs best across many subjects (for full results, see Table~\ref{fig:kendallW_model} in the Appendix).
Across LLMs, we do not observe high correlations within any of the subjects (see Figure \ref{fig:kendallW_subtask} in Appendix \ref{append:mmlu_details}).
Hence, similar to previous findings \citep[\eg][]{sclar2023quantifying}, we do not identify any coherent template preferences across LLMs (for detailed results, see Appendix~\ref{append:mmlu_details}). 

\section{Conclusion}
\label{sec:conclusion}

PromptEval enables a more comprehensive evaluation of LLMs. We hope that comparing distributions or quantiles across many prompt variants will enable more robust leaderboards and address the common concern of comparing LLMs with a single pre-defined prompt. Prior to our work, a major limitation of such evaluation was its cost. We demonstrated empirically across several popular benchmarks that our method can produce accurate performance distribution and quantile estimates at the cost of 2-4 single-prompt evaluations, out of hundreds possible. However, several questions remain: how to decide on the set of prompts for evaluation and how to best utilize our distribution estimates for comparison in various contexts. For the former, we utilized suggestions from prior work \citep{mizrahi2023state,sclar2023quantifying} and for the latter, we primarily focused on quantiles as well-established robust performance measures.

Besides evaluation, another common problem in practice is finding the best prompt for a given task. Our method can be applied in this setting when there is a pre-defined set of candidate prompts (Figure \ref{fig:bai-mmlu}). However, several recent works \citep{prasad2023grips,yang2023large,li2023spell,ye2023prompt} demonstrate the benefits of dynamically generating new prompt candidates. For example, \citet{prasad2023grips} propose an evolutionary algorithm that creates new prompts based on the ones that performed well at an earlier iteration. Extending PromptEval to accommodate an evolving set of prompt candidates is an interesting future work direction.

We comment on the limitations of our work in Appendix \ref{sec:limitations}. 

\section{Acknowledgements}

This paper is based upon work supported by the National Science Foundation (NSF) under grants no.\ 2027737 and 2113373.

\bibliographystyle{plainnat}
\bibliography{FMP}

\newpage
\appendix
\section{Limitations}\label{sec:limitations}
While our method offers a more reliable and flexible evaluation, it relies on using multiple prompts. As a result, if selecting a single prompt was a challenge in earlier benchmarks, determining the appropriate set of prompt templates now becomes the key challenge. Although methods have been proposed for generating and diversifying multiple prompts \citep{mizrahi2023state}, and the selection of individual prompts becomes less critical when many are used, this remains a limitation for future work to address.

Another limitation is that we do not focus on prompt engineering or attempt to solve this issue. While addressing prompt engineering would be a significant contribution to the field, our approach assumes a predefined set of prompts and focuses solely on evaluation or optimization within that pool of prompts. This is a practical and widely used setting but does not address the broader prompt engineering challenge.

\section{Adapting the correctness model for bounded $Y_{ij}$}\label{sec:adapt}
There might be situations in LLM evaluation in which $Y_{ij}\notin \{0,1\}$ but $Y_{ij}\in[0,1]$. For example, in AlpacaEval 2.0 \citep{alpaca_eval}, the response variable is bounded and can be translated to the interval $[0,1]$. Also, some scenarios of HELM \citep{liang2022holistic} and the Open LLM Leaderboard \citep{open-llm-leaderboard} have scores in $[0,1]$. One possible fix is changing the model for $Y_{ij}$. For example, if $Y_{ij}$ are continuous, the Beta model would be appropriate. Another possibility that offers a more immediate fix is binarizing $Y_{ij}$ as proposed by \citet{polo2024tinybenchmarks}. That is, using a training set containing correctness data from $L$ LLMs, we could find a constant $c$ such that $\sum_{i,j,l} Y_{ijl} \approx  \sum_{i,j,l} \ones[Y_{ijl}\geq c]$, where the index $l$ represents each LLM in the training set. Then, we define $\Tilde{Y}_{ij} \triangleq \ones[Y_{ij}\geq c]$ and work with this newly created variable.


\section{Comments on the computational complexity of \method{}}\label{append:complexity}
Consider the case of our experiments in which prompts are represented by embeddings of fixed size, examples are represented by one-hot encodings, and our model is given by logistic regression. Because the dimension of the embeddings does not depend on the number of prompt variations, the number of samples and variables used to fit our model does not vary with the number of prompt variations. Then, computational costs are constant with respect to the number of prompt templates. On the other hand, the number of variables (and consequently samples, to make estimation possible) should increase linearly with the number of examples, which are usually hundreds or a few thousand. Thus, this should not be a problem in most practical cases. 

\section{Computing resources}
All experiments were conducted using a virtual machine with 32 cores. The results for each benchmark separately can be obtained within 3-6 hours.

For fine-tuning BERT embeddings, we employ multiple NVIDIA A30 GPUs with 24 GB vRAM, requiring 70 hours of training and an additional approximately 350 hours for hyperparameter search. Fine-tuning can be conducted on GPUs with smaller capacities.

\section{Estimation errors by task}

In Figures \ref{fig:h3}, \ref{fig:h2}, and \ref{fig:h1}, we analyze the Wasserstein 1-distance per task for each benchmark when using the method PE-EmbPT, a robust and versatile variation of \method{}. The results show that for BBH and LMentry, the estimation errors (Wasserstein 1-distance) are more uniform across tasks compared to MMLU, where some tasks exhibit higher estimation errors. This discrepancy occurs because all tasks in BBH and LMentry have the same number of examples, whereas tasks in MMLU, particularly those with higher estimation errors, have a significantly larger number of examples when compared to the others. In those cases, a larger number of evaluations is recommended.

\begin{figure}[h]
\centering
\includegraphics[width=.7\textwidth]{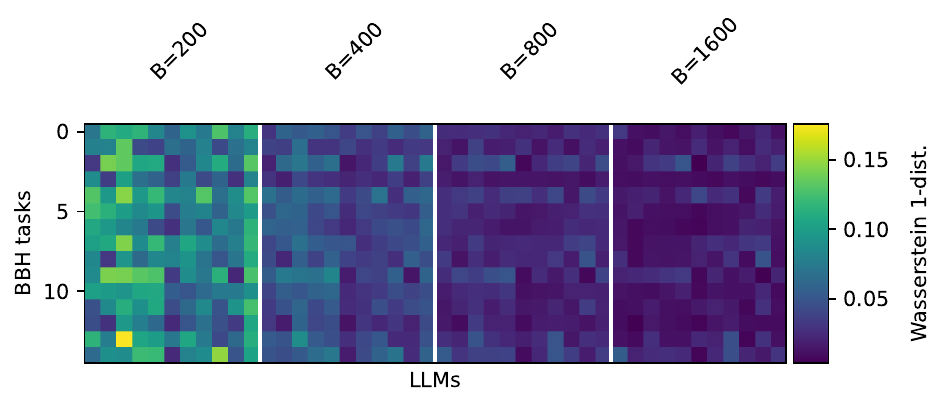}
\caption{\small  Estimation error for the BBH tasks.}
\label{fig:h3}
\end{figure}

\begin{figure}[h]
\centering
\includegraphics[width=.8\textwidth]{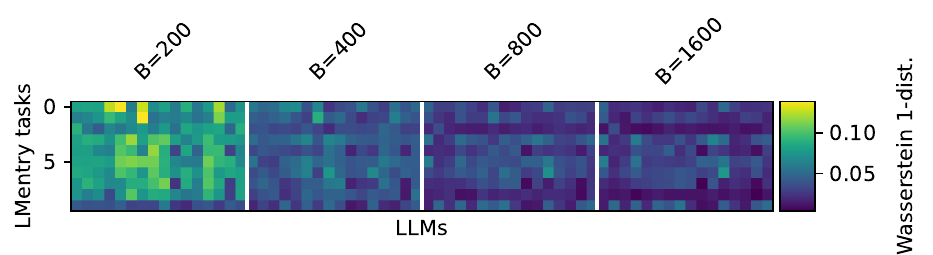}
\caption{\small  Estimation error for the LMentry tasks.}
\label{fig:h2}
\end{figure}

\begin{figure}[h]
\centering
\includegraphics[width=.6\textwidth]{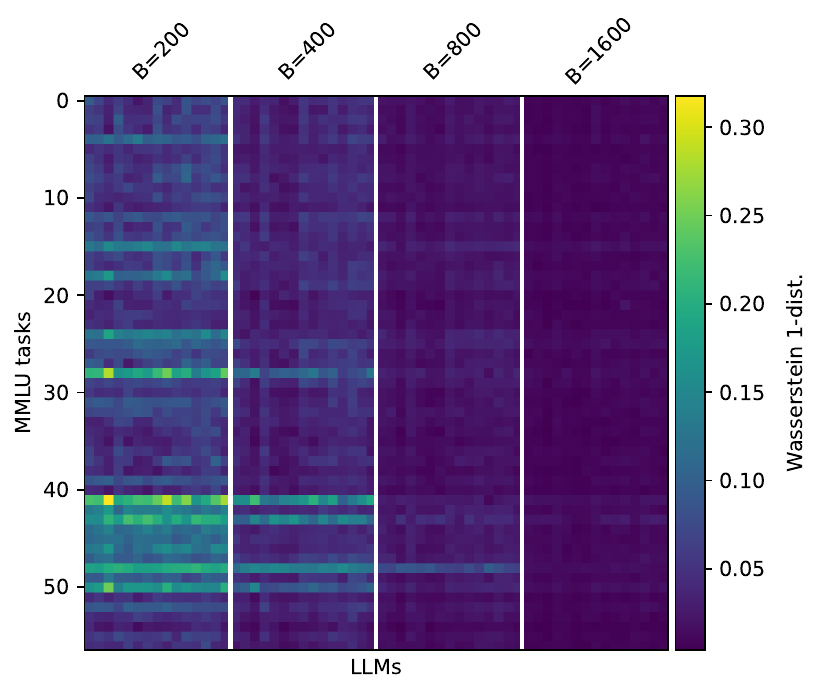}
\caption{\small  Estimation error for the MMLU tasks.}
\label{fig:h1}
\end{figure}

\section{The influence of the number of prompts in \method{} performance}
We repeat the main experiment in the paper (randomly) cutting the number of prompt templates by a factor of 5. This means we use only 20 prompt variations in MMLU, for example. In summary, PromptEval still does well, beating the baseline. However, the gap between PromptEval and the baseline has shortened due to fewer variations. This fact highlights that the bigger the number of templates, the more useful PromptEval can be relative to the baseline. The results are depicted in Figure \ref{fig:less}.
\begin{figure}[h]
    \centering    \includegraphics[width=.75\textwidth]{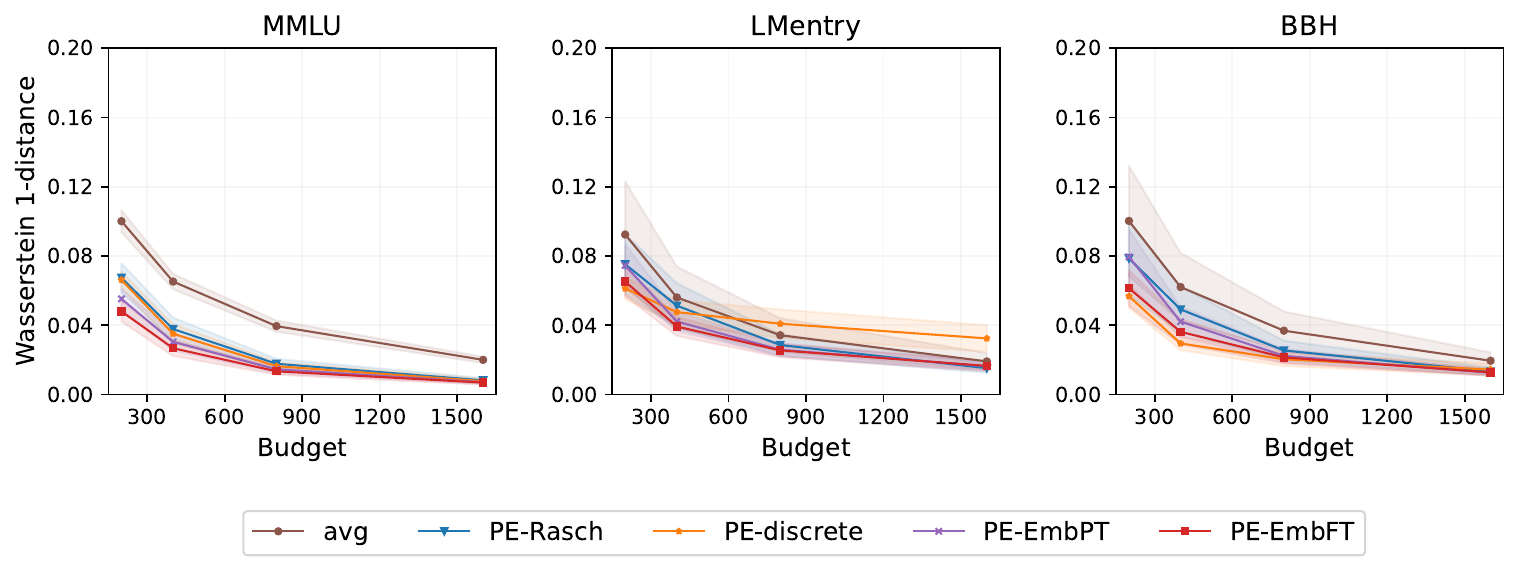}
    \caption{Performance estimation quality when we keep only 20\% of the prompts variations for each task. In summary, PromptEval still does beat the baseline. However, the gap between PromptEval and the baseline has shortened due to fewer variations.}
    \label{fig:less}
\end{figure}

\section{Extra plots for the LLM-as-a-judge experiment}\label{append:llm-judge}

In Figure \ref{fig:llm-judge-hist}, we show the performance distribution histograms for the four considered models across prompt templates. Figure \ref{fig:llm-judge-heat} shows that prompt templates consistently lead the judge to assign higher (or lower) scores across models; in this plot, we normalize the scores within rows so 0 is assigned to the lowest score and 1 is assigned to the maximum score (brighter colors denote higher scores).

\begin{figure}[h]
\centering
\includegraphics[width=1\textwidth]{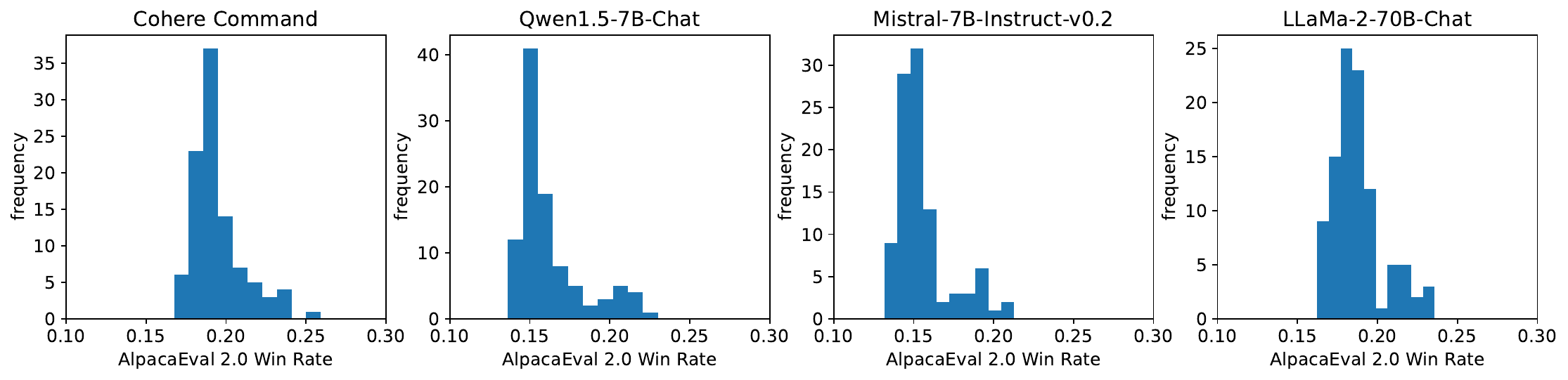}
\vspace{-0.07in}
\caption{\small  Performance distribution histograms for the four considered models across prompt templates.}
\label{fig:llm-judge-hist}
\vspace{-0.05in}
\end{figure}

\begin{figure}[h]
\centering
\includegraphics[width=1\textwidth]{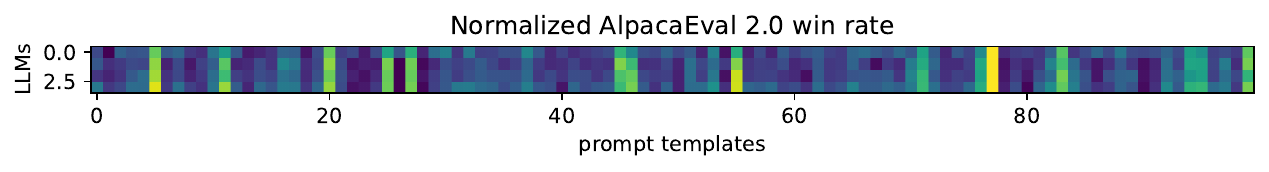}
\vspace{-0.07in}
\caption{\small  Prompt templates consistently lead the judge to assign higher (or lower) scores across models..}
\label{fig:llm-judge-heat}
\vspace{-0.05in}
\end{figure}

\section{Extra results for best-prompt identification}\label{append:best_prompt}

In Figures \ref{fig:bai-mmlu_f}, \ref{fig:bai-bbh}, and \ref{fig:bai-lmentry}, we can see the full results for MMLU, BBH, and LMentry. For all benchmarks, we can see that within each triple ``PE'', ``TRIPLE-GSE'',  ``TRIPLE-MLP-GSE'', the ``PE'' version always has some advantage with a lower regret.

The tuning and fitting process of the Multi-Layer Perceptron (MLP) classifier involves setting up a pipeline that includes feature scaling and the MLP classifier itself, which has 30 neurons in its hidden layer. This process begins by defining a range of values for critical hyperparameters: the l2 regularization strength is tested over the range from 0.001 to 10, and the initial learning rate is tested over the range from 0.001 to 0.1. These values are systematically tested through cross-validation to determine the optimal combination. During this phase, cross-validation ensures that the model is evaluated on different subsets of the data to prevent overfitting and to ensure robust performance. Once the best hyperparameters are identified, the final model is trained on the entire dataset using these optimal settings, resulting in a well-tuned MLP classifier ready for deployment.

\begin{figure}[h]
\centering
\includegraphics[width=.35\textwidth]{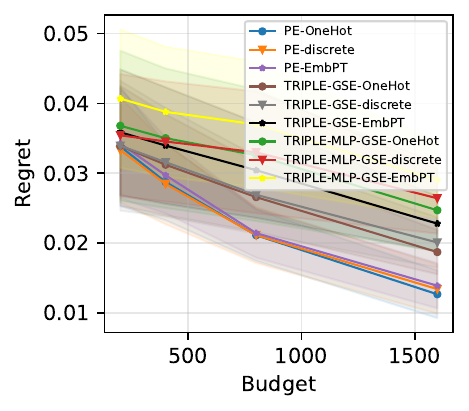}
\vspace{-0.07in}
\caption{\small  Best-prompt identification for MMLU}
\label{fig:bai-mmlu_f}
\vspace{-0.05in}
\end{figure}

\begin{figure}[h]
\centering
\includegraphics[width=.35\textwidth]{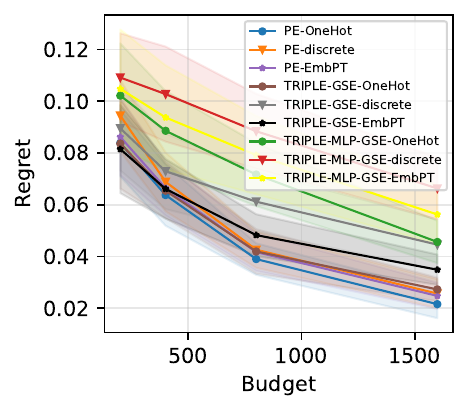}
\vspace{-0.07in}
\caption{\small  Best-prompt identification for BBH}
\label{fig:bai-bbh}
\vspace{-0.05in}
\end{figure}

\begin{figure}[h]
\centering
\includegraphics[width=.35\textwidth]{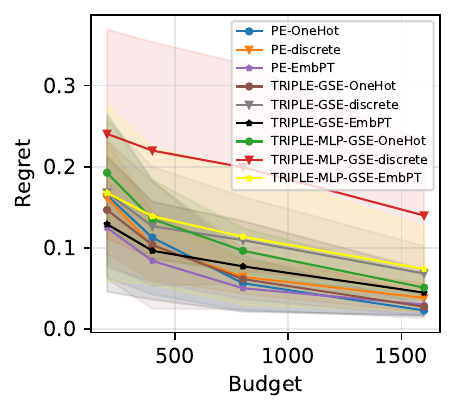}
\vspace{-0.07in}
\caption{\small  Best-prompt identification for LMentry}
\label{fig:bai-lmentry}
\vspace{-0.05in}
\end{figure}


\newpage
\section{Theoretical results}\label{append:proofs}

\subsection{Consistency of X-pIRT}\label{sec:xpirt}
In Theorem \ref{thm:pirt_consistency}, we claim that the X-pIRT estimator is uniformly consistent over all $i\in\cI$.
\begin{theorem}\label{thm:pirt_consistency}
    Under conditions \ref{condit1}, \ref{condit2}, and \ref{condit3}, it is true that
    \[\textstyle
    \sup_{i\in\cI}\left|\hat{\Ex}[S_i\mid Y_{\cS}]- S_i\right|  \to 0\text{ in probability as }I,J\to\infty.
    \]
\end{theorem}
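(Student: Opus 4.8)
The plan is to exploit an exact cancellation between the observed-score part of the X-pIRT estimator \eqref{eq:pirt} and the corresponding part of $S_i$, reducing the problem to controlling a deviation over the \emph{unseen} examples only. Write $m=|\cJ\setminus\cJ_i|$ (common to all $i$ by Condition \ref{condit2}), $\lambda_i=|\cJ_i|/J$, and $\mu_{ij}=\sigma\big(f_{\psi_0}(x_i)-g_{\gamma_0}(z_j)\big)$. Since $\frac{1}{J}\sum_{j\in\cJ_i}Y_{ij}=\frac{\lambda_i}{|\cJ_i|}\sum_{j\in\cJ_i}Y_{ij}$, a direct computation gives
\[
\hat{\Ex}[S_i\mid Y_\cE] - S_i = (1-\lambda_i)\Big(\tfrac{1}{m}\sum_{j\not\in\cJ_i}\sigma\big(f_{\hat\psi}(x_i)-g_{\hat\gamma}(z_j)\big) - \tfrac{1}{m}\sum_{j\not\in\cJ_i}Y_{ij}\Big).
\]
Using $1-\lambda_i\le 1$ and inserting the true parameters, $\sup_{i\in\cI}\big|\hat{\Ex}[S_i\mid Y_\cE]-S_i\big|\le \sup_{i}A_i+\sup_{i}B_i$, where $A_i=\big|\tfrac{1}{m}\sum_{j\not\in\cJ_i}\big(\sigma(f_{\hat\psi}(x_i)-g_{\hat\gamma}(z_j))-\mu_{ij}\big)\big|$ is a plug-in error and $B_i=\big|\tfrac{1}{m}\sum_{j\not\in\cJ_i}(\mu_{ij}-Y_{ij})\big|$ is a sampling deviation.

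For the plug-in term I would use that $\sigma$ is $\tfrac14$-Lipschitz together with the linear forms $f_\psi(x)=\psi^\top x$, $g_\gamma(z)=\gamma^\top z$ and Condition \ref{condit1}: by Cauchy--Schwarz, $\big|\sigma(f_{\hat\psi}(x_i)-g_{\hat\gamma}(z_j))-\mu_{ij}\big|\le \tfrac14 c\big(\norm{\hat\psi-\psi_0}_2+\norm{\hat\gamma-\gamma_0}_2\big)$ uniformly in $i$ and $j$, hence $\sup_{i}A_i\le \tfrac14 c\big(\norm{\hat\psi-\psi_0}_2+\norm{\hat\gamma-\gamma_0}_2\big)\to 0$ in probability by Condition \ref{condit3}. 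This part is routine.

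The main work is $\sup_{i}B_i$. Since Algorithm \ref{alg:strat_sampling} selects $\cE$ without looking at the labels, conditionally on $\cE$ the variables $\{Y_{ij}:j\not\in\cJ_i\}$ are independent, have means $\mu_{ij}$, and take values in $[0,1]$; Hoeffding's inequality then yields $\Pr(B_i>\epsilon\mid\cE)\le 2\exp(-m\epsilon^2/2)$, a bound free of $\cE$ (hence valid unconditionally) and uniform in $i$ precisely because $m$ is common to all prompts (Condition \ref{condit2}(i)). A union bound over $\cI$ gives $\Pr(\sup_{i\in\cI}B_i>\epsilon)\le 2I\exp(-m\epsilon^2/2)=2/\big(\exp(\epsilon^2 m/2)/I\big)$, which tends to $0$ by Condition \ref{condit2}(ii) with $\delta=\epsilon^2/2$. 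Combining the two bounds on $A_i$ and $B_i$ completes the argument. The only genuine obstacle is this last step: one has to justify conditioning on $\cE$ to recover independence of the unseen labels, and then the exponential growth of $m$ afforded by Condition \ref{condit2}(ii) is exactly what absorbs the factor $I$ coming from the union bound over the growing index set $\cI$ — without it the supremum need not vanish.
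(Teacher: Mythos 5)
Your proof is correct and follows essentially the same route as the paper: the exact cancellation you identify is precisely the paper's split into $\hat{\Ex}[S_i\mid Y_\cE]-\Ex[S_i\mid Y_\cE]$ (your plug-in term $A_i$, handled via the $\tfrac14$-Lipschitz property of $\sigma$, Cauchy--Schwarz, and Condition \ref{condit3}) and $\Ex[S_i\mid Y_\cE]-S_i$ (your sampling term $B_i$, handled via Hoeffding plus a union bound over $\cI$ absorbed by Condition \ref{condit2}). The only cosmetic difference is a slightly loose Hoeffding constant ($m\epsilon^2/2$ instead of $2m\epsilon^2$), which is immaterial to the limit.
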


A  direct consequence of Theorem \ref{thm:pirt_consistency} is that 
\[\textstyle
\left|\frac{1}{I}\sum_{i\in\cI}\hat{\Ex}[S_i\mid Y_{\cS}]- \frac{1}{I}\sum_{i\in\cI}S_i\right|\leq\frac{1}{I}\sum_{i\in\cI}\left|\hat{\Ex}[S_i\mid Y_{\cS}]- S_i\right|\leq   \sup_{i\in\cI}\left|\hat{\Ex}[S_i\mid Y_{\cS}]- S_i\right|
 \to 0
\]
in probability as $I,J\to\infty$. This means that the mean of predicted performances is also consistent if a practitioner wants to use it as a summary statistic. 

The proof of Theorem \ref{thm:pirt_consistency} is embedded in the proof of Theorem \ref{thm:main_consistency}.

\subsection{Proof of Theorem \ref{thm:main_consistency}}

For the following results, we denote $\psi^\top x_i$ as $\theta_i$ and $\gamma^\top z_j$ as $\beta_j$, and $\hat{\psi}^\top x_i$ as $\hat{\theta}_i$ and $\hat{\gamma}^\top z_j$ as $\hat{\beta}_j$. Moreover, if a sequence random variables $(X_n)$ converge to $0$ in distribution, we denote $X_n = o_P(1)$.

\begin{lemma}\label{lemma:0}
Under Conditions \ref{condit1} and \ref{condit3}, we have that $\sup_{i\in\cI}|\hat{\theta}_i-\theta_i|=o_P(1) $ and $\sup_{j\in\cJ}|\hat{\beta}_j-\beta_j|=o_P(1)$ as $I,J\to \infty$.
\begin{proof}
    We prove that $\sup_{i\in\cI}|\hat{\theta}_i-\theta_i|=o_P(1)$. The second statement is obtained in the same way.

    See that
    \[
    \sup_{i\in\cI}|\hat{\theta}_i-\theta_i|=\sup_{x_i}|(\hat{\psi}-\psi)^\top x_i|\leq \sup_{x_i}\norm{\hat{\psi}-\psi}_2 \norm{x_i}_2\leq c \norm{\hat{\psi}-\psi}_2=o_P(1)
    \]
    as $I,J\to \infty$. Where the first inequality is obtained using the Cauchy–Schwarz inequality, the second is obtained using Condition \ref{condit1}, and the last equality is a consequence of Condition \ref{condit3} and the continuous mapping theorem \citep{resnick2019probability}.
\end{proof}
\end{lemma}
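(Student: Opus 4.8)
The plan is to reduce the uniform control over the growing family $\{\theta_i\}_{i\in\cI}$ (and likewise $\{\beta_j\}_{j\in\cJ}$) to the consistency of the single, fixed-dimensional estimator $\hat\psi$ (resp.\ $\hat\gamma$) supplied by Condition~\ref{condit3}. The key observation is that in the linear/affine model $\theta_i$ and $\hat\theta_i$ are built from the \emph{same} covariate vector $x_i$, so their difference is the linear functional $(\hat\psi-\psi)^\top x_i$, where $\psi$ denotes the true data-generating parameter $\psi_0$ of Condition~\ref{condit3}. This is precisely what lets a high-dimensional problem (the number of prompt/example parameters grows with $I,J$) collapse onto a fixed-dimensional one.

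The first step is a deterministic, index-uniform bound. By Cauchy--Schwarz,
\[
|\hat\theta_i-\theta_i|=|(\hat\psi-\psi)^\top x_i|\le \norm{\hat\psi-\psi}_2\,\norm{x_i}_2,
\]
and Condition~\ref{condit1} replaces $\norm{x_i}_2$ by the universal constant $c$ for every $i$, so $|\hat\theta_i-\theta_i|\le c\,\norm{\hat\psi-\psi}_2$ uniformly in $i\in\cI$; since the right-hand side is free of $i$, taking $\sup_{i\in\cI}$ leaves it unchanged. The second step invokes Condition~\ref{condit3}: $\hat\psi\to\psi$ in probability as $I,J\to\infty$, and since $u\mapsto c\,\norm{u-\psi}_2$ is continuous, the continuous mapping theorem gives $c\,\norm{\hat\psi-\psi}_2=o_P(1)$. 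Combining the two steps (and nonnegativity of the supremum) yields $\sup_{i\in\cI}|\hat\theta_i-\theta_i|=o_P(1)$. The statement for $\{\beta_j\}$ follows verbatim with $(\hat\gamma,\gamma,z_j,\cJ)$ in place of $(\hat\psi,\psi,x_i,\cI)$, using the $\norm{z_j}_2<c$ half of Condition~\ref{condit1} and the $\hat\gamma\to\gamma$ half of Condition~\ref{condit3}.

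There is no genuine obstacle here; the lemma is essentially a bookkeeping step whose whole content is that linearity decouples uniform control of the per-prompt/per-example parameters from the global parameters, for which consistency is classical (see \citealt{fahrmeir1985consistency}). The only point worth being careful about is that the paper's convention ``$o_P(1)$'' means convergence in distribution to $0$, which coincides with convergence in probability because the limit is a constant, so the continuous mapping theorem applies in its usual form.
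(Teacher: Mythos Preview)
Your proof is correct and follows exactly the same route as the paper: write $\hat\theta_i-\theta_i=(\hat\psi-\psi)^\top x_i$, apply Cauchy--Schwarz, use Condition~\ref{condit1} to bound $\norm{x_i}_2$ uniformly by $c$, and conclude via Condition~\ref{condit3} and the continuous mapping theorem that $c\,\norm{\hat\psi-\psi}_2=o_P(1)$. The argument for $\beta_j$ is handled symmetrically in both.
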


\begin{lemma}\label{lemma:1}
    Under Conditions \ref{condit1} and \ref{condit3}, it is true that
    \[
    \sup_{i\in\cI}\left|\hat{\Ex}[S_i\mid Y_{\cE}]- \Ex[S_i\mid Y_{\cE}]\right|  =o_P(1)\text{ as }I,J\to\infty.
    \]
    \begin{proof}
    See that
        \begin{align*}
            \sup_{i\in\cI}\left|\hat{\Ex}[S_i\mid Y_{\cE}]  - \Ex[S_i\mid Y_{\cE}]\right|&=\sup_{i\in\cI}\frac{1-\lambda_i}{ J-|\cJ_i|} \left|\sum_{j\not\in \cJ_i} \sigma(\hat{\theta}_i-\hat{\beta}_j)-\sigma(\theta_i-\beta_j)\right|\\
            &\leq\sup_{i\in\cI}\frac{1-\lambda_i}{ J-|\cJ_i|} \sum_{j\not\in \cJ_i} \left|\sigma(\hat{\theta}_i-\hat{\beta}_j)-\sigma(\theta_i-\beta_j)\right|\\
            &\leq\sup_{i\in\cI}\frac{1-\lambda_i}{4( J-|\cJ_i|)} \sum_{j\not\in \cJ_i} \left|\hat{\theta}_i-\hat{\beta}_j-\theta_i+\beta_j\right|\\
            &\leq\frac{1-\inf_i\lambda_i}{4} \left(\sup_j|\hat{\theta}_i-\theta_i| + \sup_j|\hat{\beta}_j-\beta_j|\right)\\
            &\leq\frac{1}{4} \left(\sup_i|\hat{\theta}_i-\theta_i| + \sup_j|\hat{\beta}_j-\beta_j|\right)\\
            &=o_P(1)
        \end{align*}
        where the third step is justified by the fact that $\sigma$ is $1/4$-Lipschitz and the last step is justified by Lemma \ref{lemma:0}. 
    \end{proof}
\end{lemma}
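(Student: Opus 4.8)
The plan is to prove the lemma directly from the closed forms of the two conditional expectations in \eqref{eq:pirt}, together with Lemma~\ref{lemma:0}. Recall that $\hat{\Ex}[S_i\mid Y_\cE]$ and $\Ex[S_i\mid Y_\cE]$ share the \emph{same} first summand $\frac{\lambda_i}{|\cJ_i|}\sum_{j\in\cJ_i}Y_{ij}$ — the empirical average over the seen examples — and differ only in the second summand, where $\hat{\Ex}$ uses the plug-in means $\sigma(\hat{\theta}_i-\hat{\beta}_j)$ while $\Ex$ uses the true means $\sigma(\theta_i-\beta_j)$. So the first step is simply to subtract the two expressions: the seen-example terms cancel exactly, leaving
\[
\hat{\Ex}[S_i\mid Y_\cE]-\Ex[S_i\mid Y_\cE]=\frac{1-\lambda_i}{J-|\cJ_i|}\sum_{j\not\in\cJ_i}\big(\sigma(\hat{\theta}_i-\hat{\beta}_j)-\sigma(\theta_i-\beta_j)\big).
\]

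Next I would pass the absolute value inside the sum via the triangle inequality and use that the standard logistic function $\sigma$ is $\tfrac14$-Lipschitz on $\reals$, so that $|\sigma(\hat{\theta}_i-\hat{\beta}_j)-\sigma(\theta_i-\beta_j)|\leq\tfrac14|(\hat{\theta}_i-\theta_i)-(\hat{\beta}_j-\beta_j)|\leq\tfrac14(|\hat{\theta}_i-\theta_i|+|\hat{\beta}_j-\beta_j|)$. The coefficients $\tfrac{1-\lambda_i}{J-|\cJ_i|}$, multiplied over the $J-|\cJ_i|$ unseen indices, sum to $1-\lambda_i\leq 1$ (this is where $\lambda_i=|\cJ_i|/J$ is used), so the resulting weighted average of Lipschitz bounds is at most $\tfrac14\big(\sup_{i\in\cI}|\hat{\theta}_i-\theta_i|+\sup_{j\in\cJ}|\hat{\beta}_j-\beta_j|\big)$. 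This bound is already uniform in $i$, so taking $\sup_{i\in\cI}$ on the left-hand side changes nothing.

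Finally, I would invoke Lemma~\ref{lemma:0}, which under Conditions~\ref{condit1} and \ref{condit3} gives $\sup_{i\in\cI}|\hat{\theta}_i-\theta_i|=o_P(1)$ and $\sup_{j\in\cJ}|\hat{\beta}_j-\beta_j|=o_P(1)$; since the sum of two $o_P(1)$ terms is $o_P(1)$, the bound derived above is $o_P(1)$, which closes the argument.

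I do not anticipate any real obstacle: the proof is a short chain of inequalities once the two expressions are lined up. The only points that warrant a moment of care are (i) observing that the seen-example terms cancel, so that no probabilistic control over the sampled $Y_{ij}$'s is needed, and (ii) the bookkeeping of the convex-combination weights $\tfrac{1-\lambda_i}{J-|\cJ_i|}$, whose sum over unseen $j$ equals $1-\lambda_i$. Note that Condition~\ref{condit2} is not invoked here; we only need the unseen set $\cJ\setminus\cJ_i$ to be nonempty so that the second summand is well defined, which is already guaranteed by the low-budget setup.
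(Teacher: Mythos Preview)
Your proposal is correct and follows essentially the same approach as the paper: cancel the seen-example terms, apply the triangle inequality, use the $\tfrac14$-Lipschitz property of $\sigma$, bound by $\tfrac14(\sup_i|\hat\theta_i-\theta_i|+\sup_j|\hat\beta_j-\beta_j|)$ via the convex-combination weights, and conclude with Lemma~\ref{lemma:0}. Your exposition is in fact a bit more explicit than the paper's about why the weights collapse (their sum is $1-\lambda_i\le1$) and about Condition~\ref{condit2} not being needed here.
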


\begin{lemma}\label{lemma:2}
    Under Condition \ref{condit2}, it is true that
    \[
    \sup_{i\in\cI}\left|\Ex[S_i\mid Y_{\cE}]- S_i\right|  =o_P(1)\text{ as }I,J\to\infty.
    \]
\begin{proof}
    For an arbitrary $\epsilon>0$, see that
    \begin{align*}
        \Pr\left(\sup_{i\in\cI}\left|\Ex[S_i\mid Y_{\cE}]- S_i\right|\geq \epsilon\right)&=\Pr\left(\bigcup_{i\in\cI}\left\{\left|\Ex[S_i\mid Y_{\cE}]- S_i\right|\geq \epsilon\right\}\right)\\
        &\leq\sum_{i\in\cI}\Pr\left(\left|\Ex[S_i\mid Y_{\cE}]- S_i\right|\geq \epsilon\right)\\
        &=\sum_{i\in\cI}\Pr\left(\left|\frac{\lambda_i}{|\cJ_i|} \sum_{j\in\cJ_i} Y_{ij} + \frac{1-\lambda_i}{ |\cJ\setminus\cJ_i|} \sum_{j\not\in \cJ_i} \sigma(\theta_i-\beta_j)- \frac{1}{J}\sum_{j \in \cJ} Y_{ij}\right|\geq \epsilon\right)\\
        &=\sum_{i\in\cI}\Pr\left(\left|(1-\lambda_i) \frac{1}{ |\cJ\setminus\cJ_i|} \sum_{j\not\in \cJ_i} Z_{ij}\right|\geq \epsilon\right)\\
        &\leq\sum_{i\in\cI}\Pr\left(\left|\frac{1}{ m} \sum_{j\not\in \cJ_i} Z_{ij}\right|\geq \epsilon\right)
    \end{align*}
    where $Z_{ij}\triangleq Y_{ij}-\sigma(\theta_i-\beta_j)$. Consequently, $|Z_{ij}|\leq 1$ and $\Ex[Z_{ij}]=0$. Applying Hoeffding's inequality, we obtain
    \begin{align*}
        \Pr\left(\sup_{i\in\cI}\left|\Ex[S_i\mid Y_{\cE}]- S_i\right|\geq \epsilon\right)&\leq 2I \exp\left(-2m\epsilon^2\right)\\
        &= 2 \exp\left(\log I-2\epsilon^2m\right)\\
        &= 2 \exp\left(-\log(\exp(2\epsilon^2m)/I)\right)\\
        &\to 0
    \end{align*}
\end{proof}
\end{lemma}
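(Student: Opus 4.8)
\textbf{Proof plan for Theorem \ref{thm:main_consistency}.}

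The plan is to combine the three lemmas (Lemmas \ref{lemma:0}, \ref{lemma:1}, \ref{lemma:2}) to first establish the uniform consistency claimed in Theorem \ref{thm:pirt_consistency}, and then deduce both the Wasserstein convergence of $\hat F$ and the pointwise quantile convergence from that uniform bound. First I would observe that a triangle inequality chains the lemmas together: for every $i\in\cI$,
\[
\bigl|\hat S_i - S_i\bigr| \le \bigl|\hat{\Ex}[S_i\mid Y_\cE] - \Ex[S_i\mid Y_\cE]\bigr| + \bigl|\Ex[S_i\mid Y_\cE] - S_i\bigr|,
\]
so taking the supremum over $i$ and invoking Lemma \ref{lemma:1} (bounding the first term) and Lemma \ref{lemma:2} (bounding the second term) gives $\sup_{i\in\cI}|\hat S_i - S_i| = o_P(1)$ as $I,J\to\infty$. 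This is exactly Theorem \ref{thm:pirt_consistency}, and it is the quantitative engine behind everything else.

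Next I would handle the Wasserstein distance. Since $F$ and $\hat F$ are the empirical CDFs of the finite collections $\{S_i\}_{i\in\cI}$ and $\{\hat S_i\}_{i\in\cI}$ respectively (each with $1/I$ mass per point), the $W_1$ distance admits the order-statistics formula $W_1(F,\hat F) = \frac{1}{I}\sum_{i=1}^I |S_{(i)} - \hat S_{(i)}|$, where $S_{(i)}$ and $\hat S_{(i)}$ are the sorted values. The key fact here is that sorting is $1$-Lipschitz in the $\ell_\infty$/$\ell_1$ sense: $\frac{1}{I}\sum_i |S_{(i)} - \hat S_{(i)}| \le \frac{1}{I}\sum_i |S_i - \hat S_i| \le \sup_{i\in\cI}|S_i - \hat S_i|$, where the first inequality is the standard rearrangement bound (matching sorted sequences minimizes the $\ell_1$ cost over permutations). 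Hence $W_1(F,\hat F) \le \sup_{i\in\cI}|\hat S_i - S_i| = o_P(1)$, which gives the first display in the theorem.

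For the quantile claim, I would use the same uniform bound but argue at the level of the inverse CDF. Fix $p\in[0,1]$ and $\epsilon>0$, and let $\Delta \triangleq \sup_{i\in\cI}|\hat S_i - S_i|$. On the event $\{\Delta \le \epsilon\}$, every $\hat S_i$ lies within $\epsilon$ of the corresponding $S_i$, so the sorted value $\hat S_{(\lceil pI\rceil)}$ lies within $\epsilon$ of $S_{(\lceil pI\rceil)}$; since $Q(p) = S_{(\lceil pI\rceil)}$ and $\hat Q(p) = \hat S_{(\lceil pI\rceil)}$ for these empirical quantile functions (with the appropriate ceiling convention for the infimum definition in \eqref{eq:dist_quant}), we get $|\hat Q(p) - Q(p)| \le \epsilon$. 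Therefore $\Pr(|\hat Q(p) - Q(p)| \ge \epsilon) \le \Pr(\Delta \ge \epsilon) \to 0$, which is the desired pointwise-in-probability convergence. The main obstacle — and the only place requiring real care — is the bookkeeping that ties the infimum-based quantile definition $\hat Q(p) = \inf\{x : \hat F(x) \ge p\}$ to the order statistic $\hat S_{(\lceil pI\rceil)}$ and then transfers the $\ell_\infty$ bound on the unsorted values to a bound on matched order statistics; once that sorting-stability argument is in place, both conclusions follow mechanically. Everything else (Lipschitz constant of $\sigma$, Hoeffding, union bound, continuous mapping theorem) is already supplied by the lemmas.
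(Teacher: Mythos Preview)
Your proposal does not address the statement in question. The statement is Lemma~\ref{lemma:2}, which asserts that $\sup_{i\in\cI}|\Ex[S_i\mid Y_\cE] - S_i| = o_P(1)$ under Condition~\ref{condit2}. What you wrote is instead a proof plan for Theorem~\ref{thm:main_consistency}, and it \emph{invokes} Lemma~\ref{lemma:2} as an already-established ingredient rather than proving it. The actual content required here --- observing that $\Ex[S_i\mid Y_\cE] - S_i$ collapses to $(1-\lambda_i)\cdot\frac{1}{m}\sum_{j\notin\cJ_i}Z_{ij}$ with $Z_{ij} = Y_{ij} - \sigma(\theta_i-\beta_j)$ bounded and mean-zero, applying Hoeffding's inequality to get an $\exp(-2m\epsilon^2)$ tail for each $i$, and then a union bound over $\cI$ with Condition~\ref{condit2}(ii) absorbing the factor $I$ --- is entirely absent from your write-up.

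As an aside, what you wrote \emph{is} a valid route to Theorem~\ref{thm:main_consistency}, and it differs from the paper's. The paper first establishes the pointwise quantile convergence through a combinatorial counting argument (Lemmas~\ref{lemma:3} and~\ref{lemma:4}), and only then derives the $W_1$ convergence by integrating $|\hat Q(U)-Q(U)|$ over $U\sim\mathrm{Unif}[0,1]$ with dominated convergence and Markov's inequality. Your approach instead bounds $W_1(F,\hat F)\le \sup_i|\hat S_i - S_i|$ directly via the rearrangement/sorting inequality, and gets quantile convergence from the same $\ell_\infty$ stability of order statistics. That is more elementary and avoids the integration detour. But none of this helps here, since the task was Lemma~\ref{lemma:2} itself.
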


\begin{lemma}\label{lemma:3}
    Let $a_1,\cdots,a_n$ and $b_1,\cdots, b_n$ be two lists of real numbers and let $a_i$ and $b_j$ be the $p$-lower quantiles of those lists. Admit that there are $m_1$ $a$'s lower than $a_i$, $m_2$ $a$'s equal to $a_i$ (besides $a_i$ itself), and $m_3$ $a$'s greater than $a_i$. Then, there are at least $m_1+1$ $b$'s lower or equal to $b_j$ and at least $m_3+1$ $b$'s greater or equal to $b_j$.
    \begin{proof}
        If $a_i$ is the $p$-lower quantile of $\cA=\{a_1,\cdots,a_n\}$, then by definition $a_i$ is the lowest value in $\cA$ such that 
        \[
        \underbrace{|\{a\in \cA: a_i= a\}|}_{m_2+1}+\underbrace{|\{a\in \cA: a_i> a\}|}_{m_1}\geq {p\cdot n} 
        \]
        Because $a_i$ is the lowest value in $\cA$ to achieve that, then $m_1<{p\cdot n}$. This implies that there are at least $m_1+1$ values in $\cB=\{b_1,\cdots,b_n\}$ lower or equal to $b_j$ as it is the $p$-lower quantile of $\cB$. 
        
        Finally, because $m_1+m_2+1\geq {p\cdot n}$, we know that $\cB$ cannot have more than $m_1+m_2$ values strictly lower than $b_j$, otherwise the $p$-lower quantile of $\cB$ could not be $b_j$ but some of those values. Therefore, $\cB$ have at least $m_3+1$ values greater or equal to $b_j$.
    \end{proof}
\end{lemma}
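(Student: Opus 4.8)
The plan is to reduce the statement to elementary bookkeeping about empirical CDFs. First I would make explicit what the $p$-lower quantile of a finite list means. For a list $a_1,\dots,a_n$ with empirical CDF $F(x)=\tfrac1n\,|\{k: a_k\le x\}|$, the function $F$ is non-decreasing and right-continuous and jumps only at values occurring in the list, so $\{x\in\reals: F(x)\ge p\}$ is a set of the form $[v,\infty)$ whose infimum is attained at a list value $v$. Hence (assuming $p>0$, which is implicit in the statement since $a_i,b_j$ are assumed to be list elements) "$a_i$ is the $p$-lower quantile" is equivalent to the conjunction of: (i) $|\{k: a_k\le a_i\}|\ge pn$, and (ii) minimality, i.e. no list value strictly below $a_i$ satisfies (i). The same characterization applies to $b_j$ and the $b$-list.

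Next I would translate the hypothesis on the $a$-list into two numerical inequalities. The $a$'s that are $\le a_i$ are exactly the $m_1$ values strictly below $a_i$, plus $a_i$ itself, plus its $m_2$ other copies, so (i) gives $m_1+m_2+1\ge pn$. For a lower bound on the other side, if $m_1>0$ let $a^-$ be the largest $a$-value strictly below $a_i$; then $\{k: a_k\le a^-\}$ is precisely the set of positions with value $<a_i$, of size $m_1$, and minimality (ii) forces $m_1=|\{k: a_k\le a^-\}|<pn$. If $m_1=0$ the inequality $m_1<pn$ holds trivially. So in all cases
$m_1 < pn \le m_1+m_2+1$, and note $n=m_1+m_2+1+m_3$.

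Finally I would read off the two conclusions from $b_j$ being the $p$-lower quantile of the $b$-list. For the first: (i) for the $b$-list gives $|\{k: b_k\le b_j\}|\ge pn> m_1$, and since the left-hand side is an integer this forces $|\{k: b_k\le b_j\}|\ge m_1+1$, i.e. at least $m_1+1$ of the $b$'s are $\le b_j$. For the second: the $b$'s strictly below $b_j$ are exactly those that are $\le b^-$, where $b^-$ is the largest $b$-value strictly below $b_j$ (this count being $0$ if no such value exists), and minimality of $b_j$ gives $|\{k: b_k< b_j\}| = |\{k: b_k\le b^-\}| < pn \le m_1+m_2+1$, hence $|\{k: b_k< b_j\}|\le m_1+m_2$ by integrality; taking complements in the $n=m_1+m_2+1+m_3$ entries yields $|\{k: b_k\ge b_j\}|\ge m_3+1$. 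I do not expect a genuine obstacle here; the only points needing care are justifying that the infimum in the quantile definition is attained at a list value (so that "minimal list value with $F\ge p$" is the right reformulation) and the degenerate cases where no list value lies strictly below $a_i$ or $b_j$, which are handled by noting the corresponding count is then $0$.
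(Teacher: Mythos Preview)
Your proposal is correct and follows essentially the same approach as the paper: derive the sandwich $m_1<pn\le m_1+m_2+1$ from the definition and minimality of the $p$-lower quantile on the $a$-list, then apply the same definition and minimality to the $b$-list together with integrality to obtain both counts. Your version is slightly more careful about edge cases (attainment of the infimum, the $m_1=0$ and ``no value strictly below $b_j$'' cases), but the argument is otherwise identical.
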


\begin{lemma}\label{lemma:4}
     Let $\hat{i}$ and $i^*$ be indices in $\cI$ such that $\hat{Q}(p)=\hat{\Ex}[S_{\hat{i}}\mid Y_S]$ and $Q(p)=S_{i^*}$ for an arbitrary fixed $p\in[0,1]$. Under $\sup_{i\in\cI}|\hat{\Ex}[S_i\mid Y_{\cE}]- S_i|\leq \epsilon$ for an arbitrary $\epsilon>0$, if $|\hat{\Ex}[S_{i'}\mid Y_{\cE}]- S_{i^*}|>2\epsilon$, for some ${i'}\in\cI$, then $\hat{i}\neq {i'}$. Consequently, if $\hat{i}= {i'}$ then $|\hat{\Ex}[S_{i'}\mid Y_{\cE}]- S_{i^*}]|\leq 2\epsilon$ under $\sup_{i\in\cI}|\hat{\Ex}[S_i\mid Y_{\cE}]- S_i|\leq \epsilon$.
     
     \begin{proof}
         Define $\cA\triangleq\{S_1,\cdots, S_i\}$ and assume that there are $M_1$ values in  $\cA$ lower than $S_{i^*}$, $M_2$ values equal to $S_{i^*}$ (besides $S_{i^*}$ itself), and $M_3$ values greater than $S_{i^*}$. If $|S_{i'}-S_{i^*}|>2\epsilon$, for a certain index $i'$, there are two possibilities: (i) $S_{i'}+2\epsilon < S_{i^*}$ or (ii) $S_{i^*}+2\epsilon < S_{i'}$. Under the event $\sup_{i\in\cI}|\hat{\Ex}[S_i\mid Y_{\cE}]- S_i|\leq \epsilon$, we have:
         \begin{itemize}
             \item If (i) holds, then there is at least $M_2+M_3+1$ values of $\hat{\Ex}[S_{i}\mid Y_{\cE}]$'s such that $\hat{\Ex}[S_{i'}\mid Y_{\cE}]<\hat{\Ex}[S_{i}\mid Y_{\cE}]$ (including $\hat{\Ex}[S_{i^*}\mid Y_{\cE}]$). This implies that at most $M_1$ values of $\hat{\Ex}[S_{i}\mid Y_{\cE}]$'s will be less or equal $\hat{\Ex}[S_{i'}\mid Y_{\cE}]$. By Lemma \ref{lemma:3}, we know that $j'\neq \hat{j}$.
             \item If (ii) holds, then there is at least $M_1+M_2+1$ values of $\hat{\Ex}[S_{i}\mid Y_{\cE}]$'s such that $\hat{\Ex}[S_{i'}\mid Y_{\cE}]>\hat{\Ex}[S_{i}\mid Y_{\cE}]$ (including $\hat{\Ex}[S_{i^*}\mid Y_{\cE}]$). This implies that at most $M_3$ values of $\hat{\Ex}[S_{i}\mid Y_{\cE}]$'s will be greater or equal $\hat{\Ex}[S_{i'}\mid Y_{\cE}]$. By Lemma \ref{lemma:3}, we know that $j'\neq \hat{j}$.
         \end{itemize}
         This means that under $\sup_{i\in\cI}|\hat{\Ex}[S_i\mid Y_{\cE}]- S_i|\leq \epsilon$ for an arbitrary $\epsilon>0$, if $|\hat{\Ex}[S_{i'}\mid Y_{\cE}]- S_{i^*}|>2\epsilon$, for some ${i'}\in\cI$, then $\hat{i}\neq {i'}$.
     \end{proof}
\end{lemma}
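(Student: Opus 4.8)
The plan is to deduce the statement from the purely order-theoretic counting in Lemma~\ref{lemma:3}, applied to the two equal-length lists $\cA\triangleq\{S_i\}_{i\in\cI}$ and $\cB\triangleq\{\hat{\Ex}[S_i\mid Y_\cE]\}_{i\in\cI}$. By the definitions of $Q$ and $\hat Q$ in \eqref{eq:dist_quant}--\eqref{eq:dist_quant_est}, the hypotheses say exactly that $S_{i^*}$ is the $p$-lower quantile of $\cA$ and that $\hat{\Ex}[S_{\hat i}\mid Y_\cE]=\hat Q(p)$ is the $p$-lower quantile of $\cB$. First I would introduce the three tie-aware counts $M_1,M_2,M_3$ equal to the number of indices $i\in\cI$ with, respectively, $S_i<S_{i^*}$, with $S_i=S_{i^*}$ but $i\ne i^*$, and with $S_i>S_{i^*}$, so that $I=M_1+M_2+M_3+1$. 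Lemma~\ref{lemma:3} then yields that at least $M_1+1$ elements of $\cB$ are $\le\hat Q(p)$ and at least $M_3+1$ are $\ge\hat Q(p)$; taking complements inside the $I$-element list, this is the same as saying that strictly fewer than $M_1+M_2+1$ elements of $\cB$ lie strictly below $\hat Q(p)$, and strictly fewer than $M_2+M_3+1$ elements of $\cB$ lie strictly above $\hat Q(p)$.

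For the main argument I would work on the event $\cG\triangleq\{\sup_{i\in\cI}|\hat{\Ex}[S_i\mid Y_\cE]-S_i|\le\epsilon\}$ and fix any $i'\in\cI$ with $|\hat{\Ex}[S_{i'}\mid Y_\cE]-S_{i^*}|>2\epsilon$, splitting into the cases $\hat{\Ex}[S_{i'}\mid Y_\cE]>S_{i^*}+2\epsilon$ and $\hat{\Ex}[S_{i'}\mid Y_\cE]<S_{i^*}-2\epsilon$. In the first case, for each of the $M_1+M_2+1$ indices $i$ with $S_i\le S_{i^*}$ (the $M_1$ strictly below, the $M_2$ ties, and $i^*$ itself), the event $\cG$ gives $\hat{\Ex}[S_i\mid Y_\cE]\le S_i+\epsilon\le S_{i^*}+\epsilon<S_{i^*}+2\epsilon<\hat{\Ex}[S_{i'}\mid Y_\cE]$, so at least $M_1+M_2+1$ elements of $\cB$ are strictly below $\hat{\Ex}[S_{i'}\mid Y_\cE]$. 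In the second case, symmetrically, for each of the $M_2+M_3+1$ indices $i$ with $S_i\ge S_{i^*}$ we get $\hat{\Ex}[S_i\mid Y_\cE]\ge S_i-\epsilon\ge S_{i^*}-\epsilon>S_{i^*}-2\epsilon>\hat{\Ex}[S_{i'}\mid Y_\cE]$, so at least $M_2+M_3+1$ elements of $\cB$ are strictly above $\hat{\Ex}[S_{i'}\mid Y_\cE]$. Either conclusion is incompatible with $\hat{\Ex}[S_{i'}\mid Y_\cE]$ equalling $\hat Q(p)$, by the complement bounds from the previous paragraph; since $i'=\hat i$ would force $\hat{\Ex}[S_{i'}\mid Y_\cE]=\hat{\Ex}[S_{\hat i}\mid Y_\cE]=\hat Q(p)$, we conclude $i'\ne\hat i$, which is the first assertion of the lemma.

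The ``consequently'' clause is then just the contrapositive: on $\cG$, $\hat i=i'$ implies $|\hat{\Ex}[S_{i'}\mid Y_\cE]-S_{i^*}|\le2\epsilon$; specializing to $i'=\hat i$ this reads $|\hat Q(p)-Q(p)|\le2\epsilon$, which is the form later used in Theorem~\ref{thm:main_consistency}. I expect the one genuinely delicate point to be the bookkeeping around ties: one must make sure that the ``$\le$''/``$\ge$'' counts coming out of Lemma~\ref{lemma:3} are converted to the correct \textbf{strict} complement bounds, that the witness sets of size $M_1+M_2+1$ (resp. $M_2+M_3+1$) really do include $i^*$, and that every chained inequality in each case stays strict so that the final comparison against $\hat Q(p)$ is strict. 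Once the three counts are set up carefully, the remainder is elementary arithmetic using only $\epsilon>0$ and the uniform bound defining $\cG$.
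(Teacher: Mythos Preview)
Your proposal is correct and follows essentially the same route as the paper: define the tie-aware counts $M_1,M_2,M_3$ around $S_{i^*}$, invoke Lemma~\ref{lemma:3} to bound how many elements of $\cB$ can lie strictly below/above $\hat Q(p)$, and then split into two cases to produce too many strict inequalities on one side of $\hat{\Ex}[S_{i'}\mid Y_\cE]$, contradicting $\hat{\Ex}[S_{i'}\mid Y_\cE]=\hat Q(p)$.

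One small remark worth noting: the paper's written proof phrases its two cases as $S_{i'}+2\epsilon<S_{i^*}$ and $S_{i^*}+2\epsilon<S_{i'}$ (i.e., in terms of $S_{i'}$ rather than $\hat{\Ex}[S_{i'}\mid Y_\cE]$), whereas the lemma's hypothesis is $|\hat{\Ex}[S_{i'}\mid Y_\cE]-S_{i^*}|>2\epsilon$. Your choice to split directly on $\hat{\Ex}[S_{i'}\mid Y_\cE]\gtrless S_{i^*}\pm 2\epsilon$ matches the stated hypothesis exactly and makes the chained inequalities $\hat{\Ex}[S_i\mid Y_\cE]\le S_{i^*}+\epsilon<\hat{\Ex}[S_{i'}\mid Y_\cE]$ (and the symmetric one) immediate; this is a slightly cleaner bookkeeping than the paper's, but the substance is the same.
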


\begin{proof}[\textbf{Proof of Theorem \ref{thm:main_consistency} (Part 1)}]
    Let $\hat{i}$ and $i^*$ be indices in $\cI$ such that $\hat{Q}(p)=\hat{\Ex}[S_{\hat{i}}\mid Y_S]$ and $Q(p)=S_{i^*}$. Notice that Lemma \ref{lemma:4} guarantees that $\sup_{i\in\cI}|\hat{\Ex}[S_{i}\mid Y_S]-S_i|\leq \epsilon$ implies $|\hat{\Ex}[S_{\hat{i}}\mid Y_S] - S_{i^*}|\leq 2\epsilon$, for an arbitrary $\epsilon>0$. Consequently,
    \[
    \Pr\left(|\hat{\Ex}[S_{\hat{i}}\mid Y_S] - S_{i^*}|\leq 2\epsilon\right)\geq \Pr\left(\sup_{i\in\cI}|\hat{\Ex}[S_{i}\mid Y_S]-S_i|\leq \epsilon\right)=1+o(1)
    \]
    because
    \[
    \sup_{i\in\cI}\left|\hat{\Ex}[S_i\mid Y_{\cE}]- S_i\right| \leq \sup_{i\in\cI}\left|\hat{\Ex}[S_i\mid Y_{\cE}]- \Ex[S_i\mid Y_{\cE}]\right|+\sup_{i\in\cI}\left|\Ex[S_i\mid Y_{\cE}]- S_i\right|  =o_P(1)\text{ as }I,J\to\infty
    \]
    holds by lemmas \ref{lemma:1} and \ref{lemma:2}. Because $\epsilon>0$ is arbitrary, we have that $|\hat{Q}(p)-Q(p)|= o_P(1)$.

\end{proof}

\begin{proof}[\textbf{Proof of Theorem \ref{thm:main_consistency} (Part 2)}]
        We start this proof by showing that 
        \[
        |\hat{Q}(U)-Q(U)|=o_P(1)
        \]
        with $U\sim\text{Unif}[0,1]$ independent of $\hat{Q}$ and $Q$. 
        
        For an arbitrary $\epsilon>0$, see that
        \begin{align*}
            \lim_{I,J\to\infty} \Pr(|\hat{Q}(U)-Q(U)|> \epsilon)&=\lim_{I,J\to\infty}\Ex\left[\Pr(|\hat{Q}(U)-Q(U)|> \epsilon\mid U)\right]\\
            &=\Ex\left[\lim_{I,J\to\infty}\Pr(|\hat{Q}(U)-Q(U)|> \epsilon\mid U)\right]\\
            &=0
        \end{align*}
        where the second equality is justified by the Dominated Convergence Theorem \citep{resnick2019probability} and the last one is justified by $|\hat{Q}(p)- Q(p)|=o_P(1)$. Now, we see that
        \begin{align*}
             \lim_{I,J\to\infty}\Ex[W_1 (F, \hat{F})] &=  \lim_{I,J\to\infty}\Ex\left[\int_0^1 |Q(t)-\hat{Q}(t)|\mathrm{d}t\right]\\
            &= \lim_{I,J\to\infty}\Ex\left[|\hat{Q}(U)-Q(U)|\right]\\
            &=0
        \end{align*}
        where the last step is justified by Fubini's Theorem \citep{resnick2019probability}, $|\hat{Q}(U)-Q(U)|=o_P(1)$, and the Lebesgue Dominated Convergence Theorem \citep{resnick2019probability}. For an arbitrary $\epsilon>0$ and applying Markov's inequality, we get
        \begin{align*}
             \lim_{I,J\to\infty}\Pr(W_1 (F, \hat{F})>\epsilon)\leq \frac{1}{\epsilon}\lim_{I,J\to\infty}\Ex[W_1 (F, \hat{F})] =0
        \end{align*}
    \end{proof}

\section{Details MMLU data}
\label{append:data}

Algorithm \ref{alg:template_gen} for automatically generating templates can be seen as a graph traversal of a template graph, whose nodes are defined by which features they have: a separator $SEP$, a space $SPA$, and an operator $OP$. By traversing this graph, we can collect unique templates that can used in the evaluation of LLMs on tasks.

\begin{algorithm}
    \textbf{Input:} Base prompt template features: Separator $SEP$, Space $SPA$, Operator $OP$.
    \smallskip

    \textbf{Output:} Prompt templates.

    \smallskip
    From template agenda, pop a template. Swap $SEP$ with another $SEP$, add to templates. Swap $SPA$ with another $SPA$, add to templates. Swap $OP$ with another $OP$, add to templates. Add the generated templates to the agenda.

    \medskip
    \vspace{.055cm}
    \textbf{return} generated templates.
    \caption{TemplateGeneration}
    \label{alg:template_gen}
\end{algorithm}

Next, we utilize the unitxt ~\citep{unitxt} preprocessing library to build custom datasets with the generated templates. Standardized and accurate evaluation is then carried out via the LM-Eval-Harness ~\citep{eval-harness} evaluation library.

\section{Details MMLU spread analysis}
\label{append:mmlu_details}

Figure \ref{fig:all_histogram_acc} depicts the performance of LLMs on the whole MMLU. 

\begin{figure}[h]
    \centering
      \includegraphics[width=0.9\textwidth]{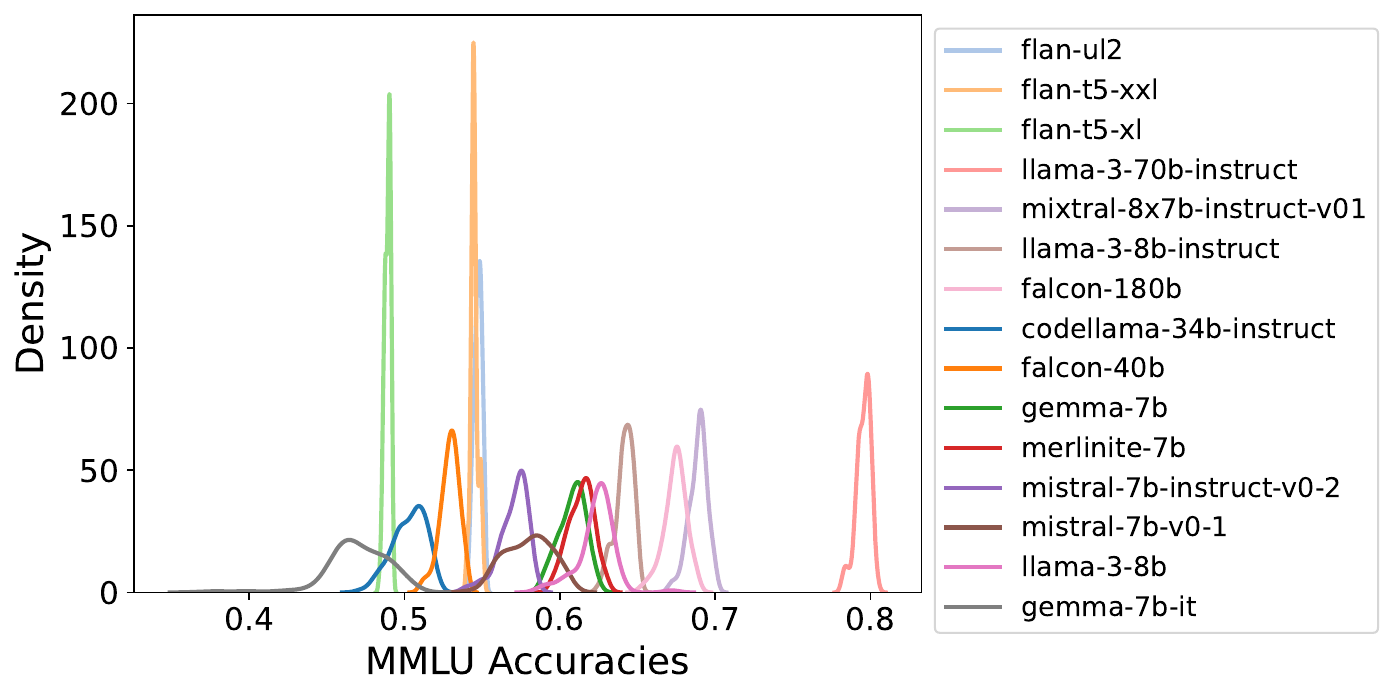}

    \caption{MMLU accuracy (all 57 subjects).}
    \label{fig:all_histogram_acc}
\end{figure}

To correlate the ranks from different judges, we can use Kendall's $W$. Kendall's $W$ ~\citep{kendall1939problem} ranges from 0 (no agreement) to 1 (perfect agreement) and is calculated as $W = \frac{12S}{m^2(n^3 - n)}$, where $S$ is the sum of squared deviations of the total ranks from the mean rank, $m$ is the number of rankers, and $n$ is the number of objects ranked. In our case, we first have MMLU subjects ranking prompt templates, and then we have LLMs ranking prompt templates.

In Figure \ref{fig:kendallW_subtask}, we see the distribution of Kendall's $W$ for subjects ranking templates. The correlation is not significant, with the highest $W$ around 0.25. This suggests that there is no "best" prompt for a subject.

\begin{figure}[h]
    \centering
    \includegraphics[width=0.8\textwidth]{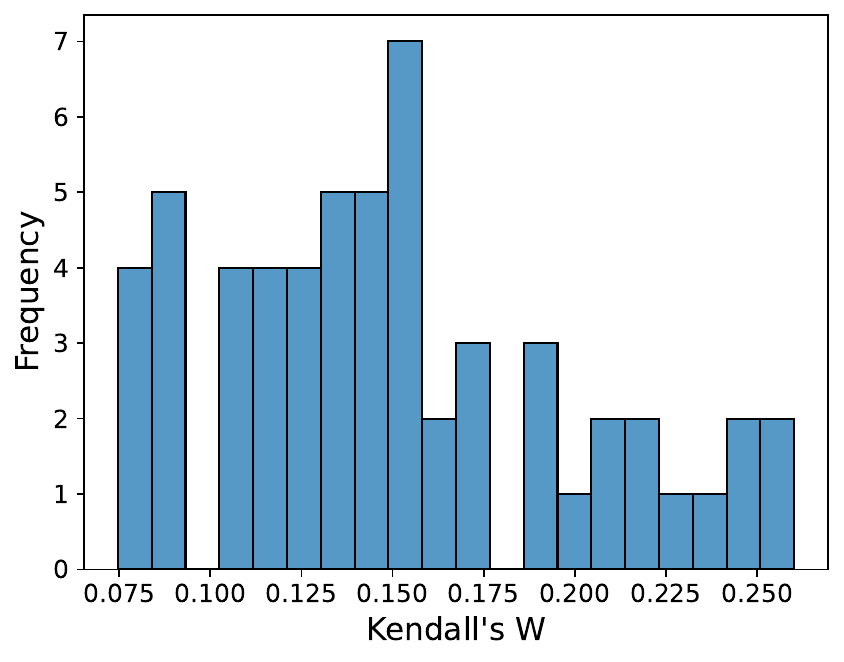}
    \caption{Kendall's $W$ per MMLU subject: Here we see a distribution of Kendall's $W$ over the 57 subjects of MMLU.}
    \label{fig:kendallW_subtask}
\end{figure}

In Table \ref{fig:kendallW_model}, we see the values of Kendall's $W$ for each model. For most models, the $W$ value is not high, but for gemma-7b and mistral-7b-v0-1, the value of $W$ is 0.45 and 0.35, respectively. Curiously, both of the top-ranked prompt templates have lots of commas.
The best-ranked prompt is \textbf{"The, following, are, multiple, choice, questions, (with, answers), about, {topic}], {question}], Answers], {choices}], Answer]"}. Interestingly, the comma separation of each word or phrase in this prompt template may aid the model in parsing and effectively understanding the different components of the prompt structure.

\begin{table}[h]
  \centering

  \large
  \caption{Kendall's $W$ per LLM}
  \vspace{0.5\baselineskip}
  \begin{tabular}{lr}
    \toprule
     Model & Kendall's W \\
    \midrule
    meta-llama/llama-3-8b-instruct & 0.126027 \\
    meta-llama/llama-3-8b & 0.252835 \\
    meta-llama/llama-3-70b-instruct & 0.101895 \\
    mistralai/mistral-7b-instruct-v0-2 & 0.219841 \\
    mistralai/mistral-7b-v0-1 & 0.345592 \\
    mistralai/mixtral-8x7b-instruct-v01 & 0.131487 \\
    codellama/codellama-34b-instruct & 0.287066 \\
    ibm-mistralai/merlinite-7b & 0.146411 \\
    \textbf{google/gemma-7b-it} & \textbf{0.445478} \\
    google/gemma-7b & 0.179373 \\
    google/flan-t5-xl & 0.066501 \\
    google/flan-t5-xxl & 0.056257 \\
    google/flan-ul2 & 0.109076 \\
    tiiuae/falcon-180b & 0.165600 \\
    tiiuae/falcon-40b & 0.100173 \\
    \bottomrule
    \end{tabular}
  
  \label{fig:kendallW_model}
\end{table}

Figure \ref{fig:all_mmlu_sensitivity} illustrates sensitivity for llama-3-8b, gemma-7b, and merlinite-7b, respectively. On the template graph, a distance 1 means templates differ by only 1 feature, a distance 2 means templates differ by 2 features, etc. We see that there is no significant correlation between template distance and the accuracy spread. In the cases of gemma-7b and merlinite-7b, the accuracy spread for templates with smaller distance seems to be smaller, possibly implying that the template graph for these models is smooth.

\begin{figure}[h]
    \hspace{-1.7\baselineskip}
    \begin{tabular}{ccc}
      \includegraphics[width=0.33\textwidth]{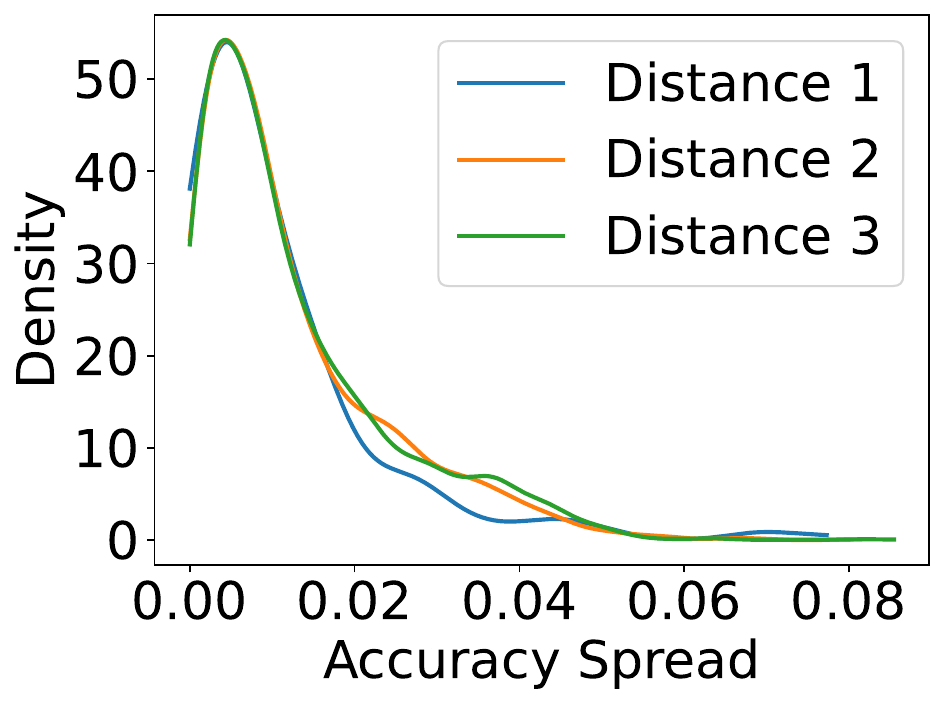}
      &
      \includegraphics[width=0.33\textwidth]{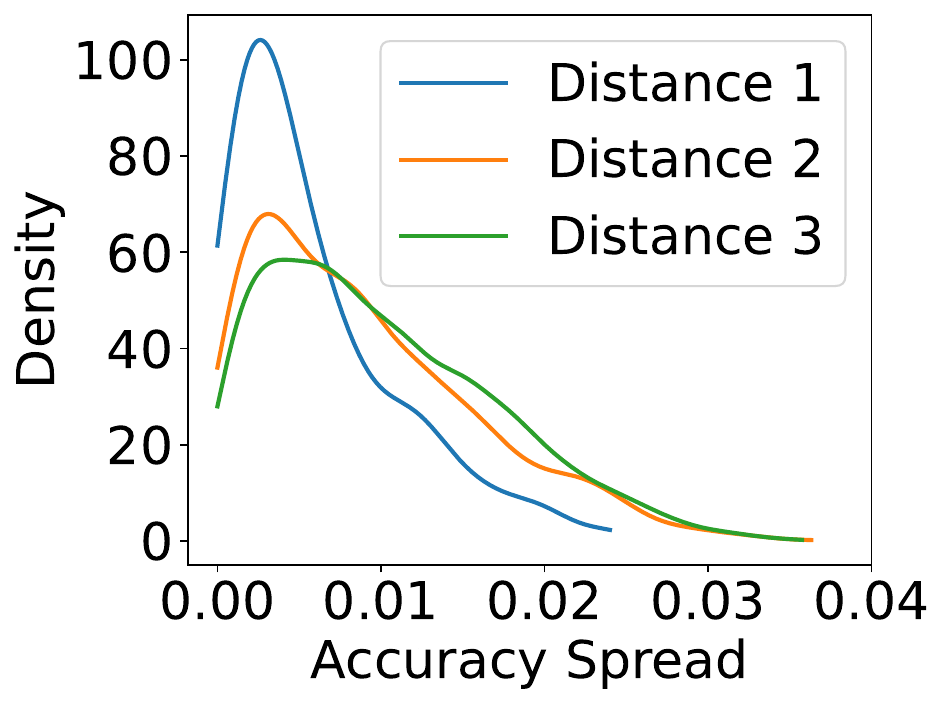}
      &
      \includegraphics[width=0.33\textwidth]{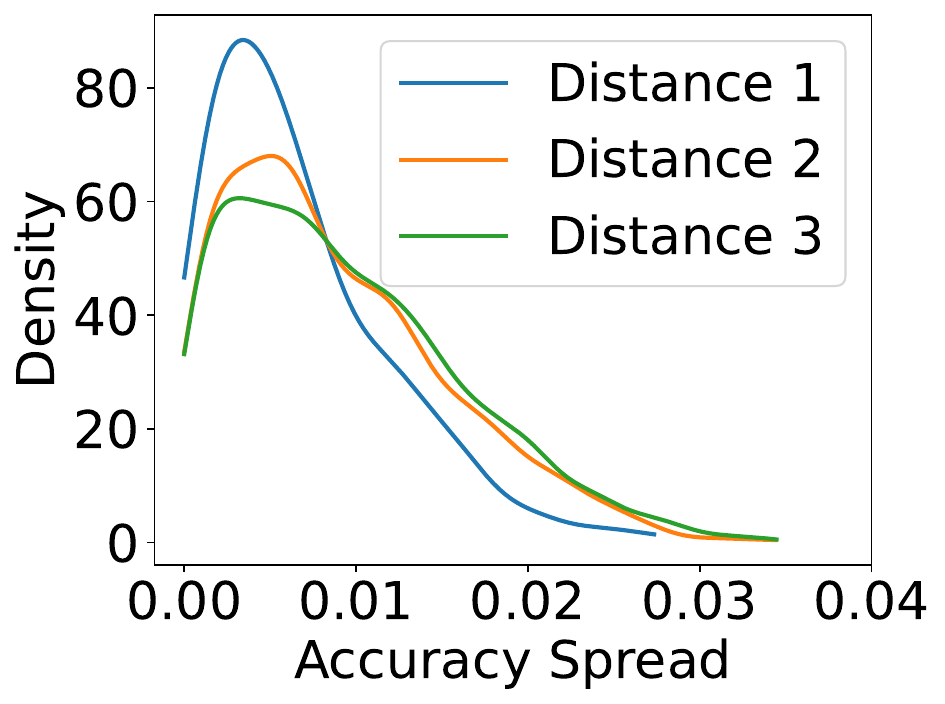}
    \end{tabular}
    \caption{Model sensitivity for llama-3-8b, gemma-7b, and merlinite-7b.}
    \label{fig:all_mmlu_sensitivity}
\end{figure}

\section{BERT fine-tuning details}
\label{append:finetuning}


%
\subsection{Model}

We augment the BERT model by extending its input embeddings by $|J|$ [Example ID] tokens which we use to feed information about the example identity to the model. 
Additionally, we add a linear downward projection (\textit{d = 25}) on top of the final BERT layer to reduce the dimensionality of the resulting covariates.

\subsection{Training data}
To obtain training examples, we concatenate all prompting templates with all [Example ID] tokens giving us $|I| \times |J|$ model inputs (giving us the following dataset sizes for the respective benchmarks: BBH 209,280; LMentry 175,776; and MMLU 1,121,568). 
Labels consist of vectors of correctness scores $y_{ij}$ from the LLMs in the training set, making the training task a multi-label binary classification problem.
We train on an iid split of half of the LLMs at a time and test on the other half.
Additionally, the training data are split along the example axis into an 80\% training and 20\% validation set. 

\subsection{Hyperparameters}
We run a small grid search over different plausible hyperparameter settings and settle on the following setup:
We employ the Adam optimizer \citep{kingma2014adam} with an initial learning rate of 2e-5 and a weight decay of 1e-5. The learning rate undergoes a linear warm-up over 200 steps, followed by exponential decay using the formula $lr_{currnt} = \gamma^{s} \cdot lr_{init}$, where $s$ is the number of steps after the warmup phase and the decay factor $\gamma$ is set to 0.99995.
We train with a batch size of 96.

\section{Heuristics for discrete features}
\label{append:heuristics}
For the BBH and LMentry benchmarks, we use the following heuristics to construct feature representations of prompt templates.

\begin{table}[h]
  \caption{Overview of Discrete Features}
  \label{features-table}
  \centering
  \begin{tabular}{lll}
    \toprule
    Category               & Feature Name            & Description                                            \\
    \midrule
    \multirow{3}{*}{Casing Features} 
                           & All Caps Words          & Count of all uppercase words                           \\
                           & Lowercase Words         & Count of all lowercase words                           \\
                           & Capitalized Words       & Count of words with the first letter capitalized       \\
    \midrule
    \multirow{2}{*}{Formatting Features} 
                           & Line Breaks             & Count of line breaks                                   \\
                           & Framing Words           & Count of capitalized or numeric words before a colon  \\
    \midrule
    \multirow{9}{*}{Special Characters Features}
                           & Colon (:)               & Count of ':'                                           \\
                           & Dash (-)                & Count of '-'                                           \\
                           & Double Bar (||)         & Count of '||'                                          \\
                           & Separator token       & Count of '<sep>'                                       \\
                           & Double Colon (::)       & Count of '::'                                          \\
                           & Parenthesis Left (()    & Count of '('                                           \\
                           & Parenthesis Right ())   & Count of ')'                                           \\
                           & Quotation (")           & Count of '"'                                           \\
                           & Question Mark (?)       & Count of '?'                                           \\
    \midrule
    Length Feature         & Space Count             & Count of spaces                                        \\
    \bottomrule
  \end{tabular}
\end{table}


\end{document}